\newtheorem*{rep@theorem}{\rep@title}
\newcommand{\newreptheorem}[2]{%
\newenvironment{rep#1}[1]{%
 \def\rep@title{#2 \ref{##1}}%
 \begin{rep@theorem}}%
 {\end{rep@theorem}}}
\newtheorem{theorem}{Theorem}
\newtheorem{lemma}[theorem]{Lemma}
\newcommand{\ALGa}{\text{\sc Reward-Doubling-1D}\xspace}
\newcommand{\ALGaguess}{\text{\sc Reward-Doubling-1D-Guess}\xspace}
\newcommand{\ALGb}{\text{\sc Reward-Doubling}\xspace}
\newcommand{\ALGs}{\text{\sc Smooth-Reward-Doubling}\xspace}
\newcommand{\supplementary}[1]{#1}
\newcommand{\abs}[1]{\ensuremath{| #1 |}}
\newcommand {\bexp}[1]{\exp\left(#1\right)}
\newcommand {\floor}[1]{\left\lfloor#1\right\rfloor}
\newcommand{\eps}{\epsilon}
\newcommand{\grad}{\triangledown}
\newcommand{\ignore}[1]{}
\newcommand{\norm}[1]{\ensuremath{\| #1 \|}}
\newcommand {\p}[1]{\left(#1\right)}
\newcommand{\R}{\ensuremath{\mathbb{R}}}
\newcommand{\xs}{\mathring{x}}
\newcommand{\BO}{\mathcal{O}}
\newcommand{\eqr}[1]{Eq.~\eqref{eq:#1}}
\newcommand{\h}{\frac{1}{2}}
\newcommand{\alg}{\mathcal{A}}
\newcommand{\ti}{_{t+1}}  %
\newcommand{\Linf}{L_\infty}
\newcommand{\heps}{\tilde{\eps}}
\DeclareMathOperator{\Regret}{Regret}
\DeclareMathOperator{\Reward}{Reward}
\DeclareMathOperator{\Ei}{Ei}
\DeclareMathOperator*{\argmin}{arg\,min}
\DeclareMathOperator{\sign}{sign}
\DeclareMathOperator{\E}{E}
\DeclareMathOperator{\Prob}{Pr}
\definecolor{darkblue}{rgb}{0,0,0.7}
\title{
No-Regret Algorithms for Unconstrained\\Online Convex Optimization
}
\author{
Matthew Streeter \\
Duolingo, Inc.$^*$
 \\
Pittsburgh, PA 15232 \\
\texttt{matt@duolingo.com} \\
\And
H. Brendan McMahan \\
Google, Inc. \\
Seattle, WA 98103 \\
\texttt{mcmahan@google.com} \\
}
\begin{document}

\let\thefootnote\relax\footnotetext{$^*$This work was performed while the
author was at Google.}

\maketitle

\begin{abstract}
  Some of the most compelling applications of online convex
  optimization, including online prediction and classification, are
  unconstrained: the natural feasible set is $\R^n$.  Existing
  algorithms fail to achieve sub-linear regret in this setting unless
  constraints on the comparator point $\xs$ are known in advance.  We
  present algorithms that, without such prior knowledge, offer
  near-optimal regret bounds with respect to \emph {any} choice of
  $\xs$.  In particular, regret with respect to $\xs = 0$ is
  \emph{constant}.  We then prove lower bounds showing that our
  guarantees are near-optimal in this setting.
\end{abstract}

\section{Introduction}

Over the past several years, online convex optimization has emerged as
a fundamental tool for solving problems in machine learning (see,
e.g., \citep{cesabianchi06plg,shwartz12online} for an introduction).
The reduction from general online convex optimization to online linear
optimization means that simple and efficient (in memory and time)
algorithms can be used to tackle large-scale machine learning
problems.  The key theoretical techniques behind essentially all the
algorithms in this field are the use of a fixed or increasing strongly
convex regularizer (for gradient descent algorithms, this is
equivalent to a fixed or decreasing learning rate sequence).  In this
paper, we show that a fundamentally different type of algorithm can
offer significant advantages over these approaches.  Our algorithms
adjust their learning rates based not just on the number of rounds,
but also based on the sum of gradients seen so far.  This allows us to
start with small learning rates, but effectively increase the learning
rate if the problem instance warrants it.

This approach produces regret bounds of the form $\BO\big(R\sqrt{T}
\log ((1+R)T)\big)$, where $R = \norm{\xs}_2$ is the $L_2$ norm of
an arbitrary comparator.
Critically, our algorithms provide this
guarantee simultaneously for \emph{all} $\xs \in \R^n$, without any
need to know $R$ in advance.  A consequence of this is that we can
guarantee at most \emph{constant} regret with respect to the origin,
$\xs = 0$.
This technique can be applied to any online convex optimization
problem where a fixed feasible set is not an essential component of
the problem.  We discuss two applications of particular interest
below:

\paragraph{Online Prediction}
Perhaps the single most important application of online convex
optimization is the following prediction setting: the world presents
an attribute vector $a_t \in \R^n$; the prediction algorithm produces
a prediction $\sigma(a_t \cdot x_t)$, where $x_t \in \R^n$ represents
the model parameters, and $\sigma: \R \rightarrow Y$ maps the linear
prediction into the appropriate label space.  Then, the adversary
reveals the label $y_t \in Y$, and the prediction is penalized
according to a loss function $\ell: Y \times Y \rightarrow \R$.  For
appropriately chosen $\sigma$ and $\ell$, this becomes a problem of
online convex optimization against functions $f_t(x) =
\ell(\sigma(a_t\cdot x), y_t)$.  In this formulation, there are no
inherent restrictions on the model coefficients $x \in \R^n$.
The practitioner may have prior
knowledge that ``small'' model vectors are more likely than large
ones, but this is rarely best encoded as a feasible set $\mathcal{F}$,
which says:
``all $x_t \in \mathcal{F}$ are equally likely, and all other $x_t$
are ruled out.''  A more general strategy is to introduce a
fixed convex regularizer: $L_1$ and $L_2^2$ penalties are common, but
domain-specific choices are also possible.
While algorithms of this form have proved very effective at solving
these problems, theoretical guarantees usually require fixing a feasible
set of radius $R$, or at least an intelligent guess of the norm of an
optimal comparator $\xs$.

\paragraph{The Unconstrained Experts Problem and Portfolio Management}
In the classic problem of predicting with expert advice (e.g.,
\citep{cesabianchi06plg}), there are $n$ experts, and on each round $t$ the
player selects an expert (say $i$), and obtains reward $g_{t,i}$ from
a bounded interval (say $[-1, 1]$).  Typically, one uses an algorithm
that proposes a probability distribution $p_t$ on experts, so the
expected reward is $p_t \cdot g_t$.

Our algorithms apply to an unconstrained version of this problem:
there are still $n$ experts with payouts in $[-1,1]$, but rather than
selecting an individual expert, the player can place a ``bet'' of
$x_{t,i}$ on each expert $i$, and then receives reward $\sum_i x_{t,i}
g_{t,i} = x_t \cdot g_t$.  The bets are unconstrained (betting a
negative value corresponds to betting against the expert).  In this
setting, a natural goal is the following: place bets so as to achieve
as much reward as possible, subject to the constraint that total
losses are bounded by a constant (which can be set equal to some
starting budget which is to be invested).  Our algorithms can satisfy
constraints of this form because regret with respect to $\xs = 0$
(which equals total loss) is bounded by a constant.

It is useful to contrast our results in this setting to previous
applications of online convex optimization to portfolio management,
for example~\citep{hazan09investing} and~\citep{agarwal06portfolio}.
By applying algorithms for exp-concave loss functions, they obtain
log-wealth within $\BO(\log(T))$ of the best constant rebalanced
portfolio.  However, this approach requires a
``no-junk-bond'' assumption: on each round, for each investment, you
always retain at least an $\alpha > 0$ fraction of your initial
investment.  While this may be realistic (though not guaranteed!) for
blue-chip stocks, it certainly is not for bets on derivatives that can
lose all their value unless a particular event occurs (e.g., a stock
price crosses some threshold).  Our model allows us to handle such
investments: if we play $x_i > 0$, an outcome of $g_i = -1$
corresponds exactly to losing 100\% of that investment.  Our results
imply that if even one investment (out of exponentially many choices)
has significant returns, we will increase our wealth
exponentially.\footnote{Our bounds are not directly comparable to the
  bounds cited above: a $\BO(\log(T))$ regret bound on log-wealth
  implies wealth at least $\BO\big(\text{OPT}/T\big)$, whereas we
  guarantee wealth like $\BO\big(\text{OPT'} - \sqrt{T}\big)$.  But
  more importantly, the comparison classes are different.}  

\paragraph{Notation and Problem Statement}
For the algorithms considered in this paper, it will be more natural
to consider reward-maximization rather than loss-minimization.
Therefore, we consider online linear optimization where the goal is to
maximize cumulative reward given adversarially selected linear reward
functions $f_t(x) = g_t \cdot x$.  On each round $t = 1 \dots T$, the
algorithm selects a point $x_t \in \R^n$, receives reward $f_t(x_t) =
g_t \cdot x_t$, and observes $g_t$.  For simplicity, we assume
$g_{t,i} \in [-1, 1]$, that is, $\norm{g_t}_\infty \leq 1$.  If the
real problem is against convex loss functions $\ell_t(x)$, they can be
converted to our framework by taking $g_t = -\grad \ell_t(x_t)$ (see
pseudo-code for \ALGb), using the standard
reduction from online convex optimization to online linear
optimization~\citep{zinkevich03giga}.

We use the compressed summation notation $g_{1:t} = \sum_{s=1}^t g_s$
for both vectors and scalars.
We study the reward of our algorithms, and
their regret against a fixed comparator $\xs$:
\[ \Reward \equiv \sum_{t=1}^T g_t \cdot x_t
\qquad \text{and} \qquad
\Regret(\xs) \equiv g_{1:T} \cdot \xs - \sum_{t=1}^T g_t \cdot x_t.
\]

\paragraph{Comparison of Regret Bounds}

The primary contribution of this paper is to establish matching upper
and lower bounds for unconstrained online convex optimization
problems, using algorithms that require no prior information about the
comparator point $\xs$.
Specifically, we present an algorithm that, for any $\xs \in \R^n$,
guarantees $\Regret(\xs)\leq \BO\big(\norm{\xs}_2 \sqrt{T} \log
((1+\norm{\xs}_2) \sqrt{T})\big)$.  To obtain this guarantee, we show that
it is sufficient (and necessary) that reward is 
$\Omega(\exp(\abs{g_{1:T}}/\sqrt T))$ (see
Theorem~\ref{thm:rb}).  This shift of emphasis from
regret-minimization to reward-maximization eliminates the
quantification on $\xs$, and may be useful in other contexts.

Table~\ref{tab:bounds} compares the bounds for \ALGb (this paper) to
those of two previous algorithms: online gradient descent
\citep{zinkevich03giga} and projected exponentiated gradient descent
\citep{kivinen97exponentiated,shwartz12online}.  For each algorithm,
we consider a fixed choice of parameter settings and then look at how
regret changes as we vary the comparator point $\xs$.

Gradient descent is minimax-optimal \citep{abernethy08} when the
comparator point is contained in a hypershere whose radius is known in
advance ($\norm{\xs}_2 \le R$) and gradients are sparse ($\norm{g_t}_2
\le 1$, top table).  Exponentiated gradient descent excels when
gradients are dense ($\norm{g_t}_\infty \le 1$, bottom table) but the
comparator point is sparse ($\norm{\xs}_1 \le R$ for $R$ known in
advance).  In both these cases, the bounds for \ALGb match those of
the previous algorithms up to logarithmic factors, even when they are
tuned optimally with knowledge of $R$.

The advantage of \ALGb shows up when the guess of $R$ used to tune the
competing algorithms turns out to be wrong.  When $\xs = 0$,
\ALGb offers constant regret compared to
$\Omega(\sqrt T)$ for the other algorithms.  When $\xs$ can be
arbitrary, only \ALGb offers sub-linear regret
(and in fact its regret bound is optimal, as shown in
Theorem~\ref{thm:lower}).

In order to guarantee constant origin-regret, \ALGb frequently ``jumps''
back to playing the origin, which may be undesirable in some
applications.  In Section~\ref{sec:epochfree} we introduce \ALGs,
which achieves similar guarantees without resetting to the origin.

\newcommand{\T}{\rule{0pt}{2.2ex}}

\begin{table}
Assuming $\|g_t\|_2 \le 1$:\\
\begin{tabular}{|p{4.3cm}|c|c|c|}
\hline
\T %
& $\xs = 0$ & $\norm{\xs}_2 \le R$ & Arbitrary $\xs$ \\
\hline
\T Gradient Descent,
$\eta = \frac {R} {\sqrt T}$
&
$R \sqrt {T}$
&
$R \sqrt {T}$
& 
$\norm{\xs}_2 T$ \\
\ALGb%
&
$\eps$
&
$R \sqrt T \log\p{\frac {n (1+R) T} {\eps} }$
&
$\norm{\xs}_2 \sqrt T \log\p{\frac {n (1+\norm{\xs}_2) T} {\eps} }$
 \\
\hline
\end{tabular}

\vspace{0.1in}
Assuming $\|g_t\|_\infty \le 1$:\\
\begin{tabular}{|p{3.5cm}|c|c|c|}
\hline
\T %
& $\xs = 0$ & $\norm{\xs}_1 \le R$ & Arbitrary $\xs$ \\
\hline
\T Exponentiated G.D.
&
$R \sqrt {T \log n}$
&
$R \sqrt {T \log n}$
& 
$\norm{\xs}_1 T$ \\ 
\ALGb
&
$\eps$
&
$R \sqrt T \log\p{\frac {n (1+R) T} {\eps} }$
&
$\norm{\xs}_1 \sqrt T \log\p{\frac {n (1+\norm{\xs}_1) \sqrt T} {\eps} }$
 \\
\hline
\end{tabular}
\caption{Worst-case regret bounds for various algorithms
(up to constant factors).
  Exponentiated G.D. uses feasible set $\{x: \|x\|_1 \le R\}$, and \ALGb
  uses $\eps_i = \frac \eps n$ in both cases.}
\vspace{-0.15in}
\label{tab:bounds}  
\end{table}

\paragraph{Related Work}
Our work is related, at least in spirit, to the use of a momentum
term in stochastic gradient descent for back propagation in neural
networks~\citep{jacobs87rate,pearlmutter91momentum,leen93adaptive}.
These results are similar in motivation in that they effectively yield
a larger learning rate when many recent gradients point in the same
direction.

In Follow-The-Regularized-Leader terms, the exponentiated gradient
descent algorithm with unnormalized weights
of~\citet{kivinen97exponentiated} plays $ x\ti = \argmin_{x \in
  \R^n_+} g_{1:t} \cdot x + \frac{1}{\eta} (x \log x - x), $ which has
closed-form solution $x\ti = \exp(-\eta g_{1:t})$.  Like our
algorithm, this algorithm moves away from the origin exponentially
fast, but unlike our algorithm it can incur arbitrarily large regret
with respect to $\xs = 0$.  Theorem \ref {thm:ftrl-bad} shows that no
algorithm of this form can provide bounds like the ones proved in this
paper.

\citet{hazan08extract} give regret bounds in terms of the variance of the
$g_t$.  Letting $G = |g_{1:t}|$ and $H = \sum_{t=1}^T g_t^2$, they
prove regret bounds of the form $\BO(\sqrt{V})$ where $V = H - G^2/T$.
This result has some similarity to our work in that $G/\sqrt{T} = \sqrt{H -
V}$, and so if we hold $H$ constant, then when $V$ is low, the critical
ratio $G/\sqrt{T}$ that appears in our bounds is large.  However, they
consider the case of a known feasible set, and their algorithm
(gradient descent with a constant learning rate) cannot obtain
bounds of the form we prove.

\section{Reward and Regret} \label{sec:reward-regret}

In this section we present a general result that converts lower bounds
on reward into upper bounds on regret, for one-dimensional online
linear optimization.  
In the unconstrained setting, this result will be sufficient to
provide guarantees for general $n$-dimensional online convex
optimization.

\newcommand{\thmrbtxt}[2]{ Consider an algorithm for one-dimensional
    online linear optimization that, when run on a sequence of
    gradients $g_1, g_2, \ldots, g_T$, with $g_t \in [-1,1]$ for all
    $t$, guarantees
  \begin{equation} #1
      \Reward \geq 
     \kappa \exp\p{\gamma |g_{1:T}|} - \epsilon,
  \end{equation}
  where $\gamma, \kappa > 0$ and $\epsilon \geq 0$ are constants.
  Then, against any comparator $\xs \in [-R,R]$, we have 
  \begin{equation} #2
    \Regret(\xs) \leq \frac{R}{\gamma} 
  \p{ \log\p{\frac{R}{\kappa \gamma}} -1 }
  + \epsilon,
  \end{equation}
  letting $0 \log 0 = 0$ when $R=0$.  Further, any algorithm with
  the regret guarantee of \eqr{regbound} must guarantee the reward of
  \eqr{rewardbound}.  }
\begin{theorem}\label{thm:rb}
\thmrbtxt{\label{eq:rewardbound}}{\label{eq:regbound}}
\end{theorem}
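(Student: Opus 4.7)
The plan is to view the two directions of the theorem as a Legendre--Fenchel duality between the functions $R\mapsto \frac{R}{\gamma}\bigl(\log\frac{R}{\kappa\gamma}-1\bigr)$ and $G\mapsto \kappa e^{\gamma G}$. No deep tools beyond one-variable calculus are needed; the only care required is in choosing the right comparator and then optimizing a free parameter to make the inequalities tight.

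For the forward direction (reward bound implies regret bound), I would start from the identity $\Regret(\xs)=\xs\,g_{1:T}-\Reward$. Since $|\xs|\le R$, I can upper bound $\xs\,g_{1:T}\le R\,|g_{1:T}|$, and plug in the assumed reward lower bound to get
\[
\Regret(\xs)\ \le\ R\,|g_{1:T}|\;-\;\kappa\exp\bigl(\gamma|g_{1:T}|\bigr)\;+\;\epsilon.
\]
This is now a function of the single scalar $G=|g_{1:T}|\ge 0$, which the adversary is free to choose. Setting the derivative with respect to $G$ to zero gives $G^\star=\frac{1}{\gamma}\log\frac{R}{\kappa\gamma}$ (valid whenever $R\ge \kappa\gamma$; when $R<\kappa\gamma$ the expression is maximized at $G=0$ and the claimed bound still holds trivially). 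Substituting $G^\star$ yields exactly $\frac{R}{\gamma}\bigl(\log\frac{R}{\kappa\gamma}-1\bigr)+\epsilon$. The edge case $R=0$ is handled by the stated convention $0\log 0 = 0$, which makes the bound reduce to $\epsilon$, consistent with taking $G=0$ in the display above.

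For the converse direction (regret bound implies reward bound), I would reverse the same identity: $\Reward=\xs\,g_{1:T}-\Regret(\xs)$. To extract an exponential lower bound on reward from a regret bound that holds uniformly in $\xs$, the natural move is to take $\xs=R\,\sign(g_{1:T})$, which makes $\xs\,g_{1:T}=R\,|g_{1:T}|$ and leaves $R$ as a free parameter. This gives, for every $R\ge 0$,
\[
\Reward\ \ge\ R\,|g_{1:T}|\;-\;\tfrac{R}{\gamma}\bigl(\log\tfrac{R}{\kappa\gamma}-1\bigr)\;-\;\epsilon.
\]
Now I choose $R$ to maximize the right-hand side. The calculation in the forward direction shows that the tight choice is $R=\kappa\gamma\exp(\gamma|g_{1:T}|)$, which makes $\log\frac{R}{\kappa\gamma}=\gamma|g_{1:T}|$. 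Substituting and simplifying, the $R\,|g_{1:T}|$ term cancels the $\frac{R}{\gamma}\cdot \gamma|g_{1:T}|$ term, leaving $\frac{R}{\gamma}=\kappa\exp(\gamma|g_{1:T}|)$. Hence $\Reward\ge \kappa\exp(\gamma|g_{1:T}|)-\epsilon$, as required.

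The only step that takes any thought is the last algebraic collapse in the converse, where one has to recognize that plugging in the Fenchel-dual value of $R$ forces the bound to match the assumed reward guarantee exactly; this is what makes the two guarantees equivalent rather than merely comparable up to constants. Otherwise the argument is a textbook one-parameter optimization on each side.
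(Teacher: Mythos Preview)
Your proposal is correct and follows essentially the same route as the paper: bound $\Regret$ by $RG-\kappa e^{\gamma G}+\epsilon$, maximize over $G$ to obtain $G^\star=\gamma^{-1}\log(R/(\kappa\gamma))$, and for the converse choose $R^\star=\kappa\gamma e^{\gamma G}$ to collapse the inequality back to the reward guarantee. Your explicit framing as Legendre--Fenchel duality matches the paper's remark that $\exp(x)$ and $y\log y-y$ are convex conjugates, and you are slightly more careful than the paper about the edge cases $R<\kappa\gamma$ and $R=0$.
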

We give a proof of this theorem in the appendix.  The duality between
reward and regret can also be seen as a consequence of the fact that
$\exp(x)$ and $y \log y - y$ are convex conjugates.
The $\gamma$ term typically contains a dependence on $T$ like
$1/\sqrt{T}$.  This bound holds for all $R$, and so for some small $R$
the $\log$ term becomes negative; however, for real algorithms the
$\eps$ term will ensure the regret bound remains positive.  The minus
one can of course be dropped to simplify the bound further.

\section{Gradient Descent with Increasing Learning Rates}
\label{sec:incgd}

In this section we show that allowing the learning rate of gradient
descent to sometimes increase leads to novel theoretical guarantees.

To build intuition, consider online linear optimization in one dimension,
with gradients $g_1, g_2, \ldots, g_T$, all in $[-1, 1]$.  In this
setting, the reward of unconstrained gradient descent has a simple closed form:

\newcommand{\lemgdrewardtxt}{ Consider unconstrained gradient descent
  in one dimension, with learning rate $\eta$.  On round $t$, this
  algorithm plays the point $x_t = \eta g_{1:t-1}$.  Letting $G =
  |g_{1:t}|$ and $H = \sum_{t=1}^T g_t^2$, the cumulative reward of
  the algorithm is exactly
  \[
  \Reward = \frac \eta 2 \p { G^2 - H }.
  \]
}

\begin{lemma}\label{lem:gd-reward}
\lemgdrewardtxt
\end {lemma}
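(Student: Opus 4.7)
The plan is a direct calculation using the closed-form play $x_t = \eta g_{1:t-1}$, together with a telescoping identity for $g_{1:t}^2$. I would not need any auxiliary lemmas; the whole proof should be two or three lines.

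First I would write out the reward by definition,
\[
\Reward = \sum_{t=1}^T g_t x_t = \eta \sum_{t=1}^T g_t \, g_{1:t-1},
\]
so everything reduces to evaluating the cross-term sum $S := \sum_{t=1}^T g_t\, g_{1:t-1}$.

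Next I would expand the square $g_{1:t}^2 = (g_{1:t-1} + g_t)^2 = g_{1:t-1}^2 + 2 g_t\, g_{1:t-1} + g_t^2$, and solve for the mixed term:
\[
g_t \, g_{1:t-1} = \tfrac{1}{2}\bigl(g_{1:t}^2 - g_{1:t-1}^2 - g_t^2\bigr).
\]
Summing from $t=1$ to $T$, the first two terms telescope (using $g_{1:0} = 0$) to give $g_{1:T}^2 = G^2$, and the last terms sum to $H$. Hence $S = \tfrac12 (G^2 - H)$, and multiplying by $\eta$ gives the claimed reward.

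There is no real obstacle here; the only thing to watch is the index convention (that $x_t$ depends on $g_{1:t-1}$, not $g_{1:t}$, so the telescoping starts from $g_{1:0} = 0$ rather than from $g_1^2$). If the sign or indexing were off, the identity would fail. Everything else is bookkeeping.
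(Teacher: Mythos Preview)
Your proof is correct and essentially identical to the paper's: both rest on the identity $(g_{1:t})^2 - (g_{1:t-1})^2 = g_t^2 + 2 g_t\, g_{1:t-1}$. The paper phrases the accumulation as an induction on $T$ while you write it as a telescoping sum, but these are the same argument in different clothing.
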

We give a simple direct proof in Appendix A.  Perhaps surprisingly,
this result implies that the reward is totally independent of the
order of the linear functions selected by the adversary.
Examining the expression in Lemma \ref{lem:gd-reward}, we see that the
optimal choice of learning rate $\eta$ depends fundamentally on two
quantities: the absolute value of the sum of gradients ($G$), and the
sum of the squared gradients ($H$).  If $G^2 > H$, we would like to
use as large a learning rate as possible in order to maximize reward.
In contrast, if $G^2 < H$, the algorithm will obtain negative reward,
and the best it can do is to cut its losses by setting $\eta$ as small
as possible.

One of the motivations for this work is the observation that the
state-of-the-art online gradient descent algorithms adjust their
learning rates based \emph {only} on the observed value of $H$ (or
its upper bound $T$); for
example~\citep{duchi10adaptive,mcmahan10boundopt}.
We would like to increase reward by also accounting for $G$.
But unlike $H$, which is monotonically increasing with time, $G$ can
both increase and decrease.
{\bf This makes simple guess-and-doubling tricks fail}
when applied to $G$, and necessitates a more careful
approach.

\subsection{Analysis in One Dimension}

In this section we analyze algorithm \ALGa (Algorithm~\ref{alg:a}),
which consists of a series of epochs.  We suppose for the moment that
an upper bound $\bar H$ on $H = \sum_{t=1}^T g_t^2$ is known in
advance.  In the first epoch, we run gradient descent with a small
initial learning rate $\eta = \eta_1$.  Whenever the total reward
accumulated in the current epoch reaches $\eta \bar H$, we double
$\eta$ and start a new epoch (returning to the origin and forgetting
all previous gradients except the most recent one).

\newcommand{\assign}{\leftarrow}
\begin{figure*}[t]
\begin{minipage}[t]{2.5in}
\begin{algorithm}[H]
\caption{\ALGa} \label{alg:a}
\begin{algorithmic}
\STATE {\bf Parameters:} initial learning rate $\eta_1$,
 upper bound $\bar H \ge \sum_{t=1}^T g_t^2$.
\STATE Initialize $x_1 \assign 0$, $i \assign 1$, and $Q_1 \assign 0$.
\FOR{ $t = 1, 2, \ldots, T $}
  \STATE Play $x_t$, and receive reward $x_t g_t$.
  \STATE $Q_i \assign Q_i + x_t g_t$. 
  \IF {$Q_i < \eta_i \bar H$}
  \STATE $x_{t+1} \assign x_t + \eta_i g_t$.
  \ELSE
  \STATE $i \assign i + 1$.
  \STATE $\eta_i \assign 2 \eta_{i-1}$; $Q_i \assign 0$.
  \STATE $x_{t+1} \assign 0 + \eta_i g_t$.
  \ENDIF
\ENDFOR
\end{algorithmic}
\end{algorithm}
\end{minipage}
\hfill
\begin{minipage}[t]{2.5in}
\begin{algorithm}[H]
\caption{\ALGb} \label{alg:b}
\begin{algorithmic}
\STATE {\bf Parameters:} maximum origin-regret $\eps_i$ for $1 \le i \le n$.
\FOR{ $i = 1, 2, \ldots, n$}
  \STATE Let $A_i$ be a copy of algorithm \ALGaguess (see Theorem~\ref{thm:double}), with parameter $\eps_i$.
\ENDFOR
\FOR{ $t = 1, 2, \ldots, T $}
  \STATE Play $x_t$, with $x_{t,i}$ selected by $A_i$.
  \STATE Receive gradient vector $g_t = -\grad f_t(x_t)$.
  \FOR{ $i = 1, 2, \ldots, n$}
    \STATE Feed back $g_{t,i}$ to $A_i$.
  \ENDFOR
\ENDFOR
\end{algorithmic}
\end{algorithm}
\end{minipage}
\hfill

\end{figure*}

\begin {lemma}\label{lem:knownHReward}
  Applied to a sequence of gradients $g_1, g_2, \ldots, g_T$, all in
  $[-1,1]$, where $H = \sum_{t=1}^T g_t^2 \le \bar H$, \ALGa obtains
  reward satisfying
  \begin{equation}\label{eq:alg1reward}
    \Reward = \sum_{t=1}^T x_t g_t
    \ge \frac 1 4 \eta_1 \bar H \bexp{a \frac{|g_{1:T}|}{\sqrt{\bar H}}}
    - \eta_1 \bar H,
  \end{equation}
  for $a = \log(2)/\sqrt{3}$.
\end {lemma}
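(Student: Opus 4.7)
The plan is to analyze \ALGa epoch by epoch, combining the per-epoch reward formula from Lemma~\ref{lem:gd-reward} with the geometric growth of $\eta_i$ and a bound on how much $|g_{1:T}|$ can grow per epoch.

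First, I would set up the per-epoch picture. Within any single epoch $i$ the algorithm is just plain gradient descent with constant learning rate $\eta_i$ starting at the origin (the reset $x_{t+1} \leftarrow 0 + \eta_i g_t$ is exactly the first GD step from $0$), so by Lemma~\ref{lem:gd-reward} the reward accrued during epoch $i$ is $Q_i = \tfrac{\eta_i}{2}(s_i^2 - h_i)$, where $s_i$ and $h_i$ denote the sum and sum-of-squares of the gradients during that epoch. The algorithm's stopping rule guarantees $Q_i \ge \eta_i \bar H$ for every closed epoch $i < N$ (where $N$ is the total number of epochs); for the final epoch only the trivial bound $Q_N \ge -\tfrac{\eta_N}{2}h_N \ge -\tfrac{\eta_N}{2}\bar H$ is available.

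Next, using $\eta_i = 2^{i-1}\eta_1$, I would sum per-epoch rewards. The first $N-1$ epochs contribute at least $\eta_1 \bar H \sum_{i=1}^{N-1}2^{i-1} = \eta_1 \bar H (2^{N-1} - 1)$, and subtracting the worst case loss $\tfrac{\eta_N}{2}\bar H = 2^{N-2}\eta_1\bar H$ from the last epoch gives a clean geometric-style lower bound
\[
\Reward \;\ge\; \tfrac{1}{4}\,\eta_1\bar H \cdot 2^N \;-\; \eta_1 \bar H.
\]

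The third step is to relate $N$ to $|g_{1:T}|$. One step before epoch $i<N$ closes, the stopping rule gives $s_i'^2 < 2\bar H + h_i' \le 3\bar H$, and adding the closing gradient (of magnitude at most $1$) yields $|s_i| < \sqrt{3\bar H} + 1$, with an analogous bound on $|s_N|$. Hence $|g_{1:T}| \le \sum_i |s_i|$ grows by at most roughly $\sqrt{3\bar H}$ per epoch, which rearranges to $N \gtrsim |g_{1:T}|/\sqrt{3\bar H}$. Plugging this into the geometric sum and using $2^N = \exp(N\log 2)$ produces the exponent $a = \log 2/\sqrt 3$ exactly as claimed.

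The main obstacle is the careful bookkeeping around epoch transitions and constants. Three things need attention: (i) verifying that the reset step $x_{t+1} \leftarrow 0 + \eta_i g_t$ really does start a fresh GD run so that Lemma~\ref{lem:gd-reward} applies per epoch; (ii) the possibly incomplete last epoch can wipe out half of the leading term of the geometric sum, which is exactly what forces the factor $\tfrac{1}{4}$ (rather than $\tfrac{1}{2}$) in the stated bound; and (iii) absorbing the additive ``$+1$'' in $|s_i| < \sqrt{3\bar H}+1$ cleanly enough to obtain the constant $a = \log 2/\sqrt{3}$. For the last point I would either tighten the per-epoch bound by splitting $|s_i| \le |s_i'| + |g_{t_e}^{(i)}|$ and using Cauchy--Schwarz on $\sum_i (g_{t_e}^{(i)})^2 \le \bar H$ to show that the ``$+1$'' contributes only a lower-order term in $|g_{1:T}|/\sqrt{\bar H}$, or simply note that the slack is dominated by the $-\eta_1\bar H$ additive term in the final inequality.
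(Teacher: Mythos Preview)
Your plan is essentially the paper's proof: same per-epoch application of Lemma~\ref{lem:gd-reward}, same geometric summation yielding $\Reward \ge \tfrac{1}{4}\eta_1\bar H\,2^{N}-\eta_1\bar H$, and same idea of bounding $|g_{1:T}|$ by $\sqrt{3\bar H}$ per epoch to get $N \ge |g_{1:T}|/\sqrt{3\bar H}$.

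The one place you make life harder than necessary is concern (iii). The paper avoids the ``$+1$'' entirely by bounding not the full epoch sum $s_i$ but the sum \emph{up to one round before the epoch closes}: since $Q_i<\eta_i\bar H$ at the end of round $t_{i+1}-1$, Lemma~\ref{lem:gd-reward} gives $\tfrac{\eta_i}{2}\big((g_{t_i:t_{i+1}-1})^2-\bar H\big)\le \eta_i\bar H$ and hence $|g_{t_i:t_{i+1}-1}|\le\sqrt{3\bar H}$ directly, with no closing gradient to add back. These truncated sums telescope to cover $g_{1:T}$ (up to a harmless boundary at round $T$, where the same bound applies to the unfinished last epoch), so $|g_{1:T}|\le N\sqrt{3\bar H}$ and the constant $a=\log 2/\sqrt{3}$ falls out with no Cauchy--Schwarz or absorption argument needed.
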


\begin{proof}  %
  Suppose round $T$ occurs during the $k$'th epoch.  Because epoch $i$
  can only come to an end if $Q_i \ge \eta_i \bar H$, where $\eta_i =
  2^{i-1} \eta_1$, we have
  \begin{equation}\label{eq:firstbound}
  \Reward = \sum_{i=1}^k Q_i
  \ge
  \p { \sum_{i=1}^{k-1} 2^{i-1} \eta_1 \bar H } + Q_k
  =
  \p { 2^{k-1}-1 } \eta_1 \bar H + Q_k \mbox { .}
  \end{equation}

  We now lower bound $Q_k$.  For $i=1, \dots, k$ let $t_i$ denote the
  round on which $Q_i$ is initialized to 0, with $t_1 \equiv 1$, and
  define $t_{k+1} \equiv T$.  By construction, $Q_i$ is the total
  reward of a gradient descent algorithm that is active on rounds
  $t_i$ through $t_{i+1}$ inclusive, and that uses learning rate
  $\eta_i$ (note that on round $t_i$, this algorithm gets 0 reward and
  we initialize $Q_i$ to 0 on that round).  Thus, by Lemma \ref
  {lem:gd-reward}, we have that for any $i$,
  \[
    Q_i =
    \frac {\eta_i} {2} \p { (g_{t_i:t_{i+1}})^2 - \sum_{s=t_i}^{t_{i+1}} g_s^2 }
    \ge 
    -\frac {\eta_i} {2} \bar H \mbox { .}
  \]
  Applying this bound to epoch $k$, we have
  $
  Q_k \ge - \frac 1 2 \eta_k \bar H = -2^{k-2} \eta_1 \bar H
  $.
  Substituting into \eqref{eq:firstbound} gives
  \begin{equation}\label{eq:min-reward}
  \Reward \ge \eta_1 \bar H (2^{k-1} - 1 - 2^{k-2}) = \eta_1 \bar H (2^{k-2} - 1)
\mbox { .}
  \end{equation}
  We now show that $k \ge \frac {|g_{1:T}|} {\sqrt {3\bar H} }$.  At
  the end of round $t_{i+1}-1$, we must have had $Q_i < \eta_i \bar H$
  (otherwise epoch $i+1$ would have begun earlier).  Thus, again using
  Lemma \ref{lem:gd-reward},
  \[
  \frac {\eta_i} {2} \p { (g_{t_i:t_{i+1} - 1})^2 - \bar H } \le \eta_i \bar H
  \]
  so $|g_{t_i:t_{i+1} - 1}| \le \sqrt {3 \bar H}$.  Thus,
  \[
  |g_{1:T}| \le \sum_{i=1}^k |g_{t_i:t_{i+1}-1}| \le k \sqrt {3\bar H} \mbox { .}
  \]
  Rearranging gives $k \ge \frac {|g_{1:T}|} {\sqrt {3\bar H} }$, and
  combining with \eqr{min-reward} proves the lemma.
\end {proof}

We can now apply Theorem~\ref{thm:rb} to the reward (given by
\eqr{alg1reward}) of \ALGa to show
\begin{equation}\label{eq:orb}  
\Regret(\xs) \leq b R \sqrt{\bar H} 
\p { \log\left(\frac{4 R b \sqrt{\bar H}}{\eta_1}\right) - 1 }
+ \eta_1 \bar H
\end{equation}
for any $\xs \in [-R,R]$, where $b = a^{-1} = \sqrt{3}/\log(2) <
2.5$.  When the feasible set is also fixed in advance, online gradient
descent with a fixed learning obtains a regret bound of
$\BO(R\sqrt{T})$.  Suppose we use the estimate $\bar H = T$.  By
choosing $\eta_1 = \frac{1}{T}$, we guarantee constant regret against
the origin, $\xs = 0$ (equivalently, constant total loss).  Further,
for \emph{any} feasible set of radius $R$, we still have worst-case
regret of at most $\BO(R \sqrt{T} \log((1+R) T))$, which is only
modestly worse than that of gradient descent with the optimal $R$
known in advance.

The need for an upper bound $\bar H$ can be removed using a standard
guess-and-doubling approach, at the cost of a constant factor increase
in regret (see appendix for proof).
\newcommand{\thmdoubletxt}{ Consider
  algorithm \ALGaguess, which behaves as follows.
On each era $i$, the algorithm
  runs \ALGa with an upper bound of $\bar H_i =
  2^{i-1}$, and initial learning rate $\eta_1^i = \eps 2^{-2i}$.  An
  era ends when $\bar H_i$ is no longer an upper bound on the sum of
  squared gradients seen during that era.  Letting $c = \frac {\sqrt
    2} {\sqrt 2 - 1}$, this algorithm has regret at most
  \[
  \Regret \le 
  c R \sqrt {H+1} \p { \log\p{\frac R \eps (2H+2)^{5/2}} - 1 } + \eps.
  \]
}
\begin{theorem}\label{thm:double}
\thmdoubletxt
\end{theorem}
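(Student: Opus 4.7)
The plan is to decompose the total regret additively across eras and sum the single-era bound \eqr{orb}. Since each era of \ALGaguess runs an independent copy of \ALGa starting from the origin, the cumulative regret against any fixed $\xs \in [-R, R]$ satisfies $\Regret(\xs) = \sum_{i=1}^{k}\Regret_i(\xs)$, where $k$ is the final era index. By the era-termination rule, every era $i < k$ ends only after its within-era squared-gradient sum exceeds $\bar H_i = 2^{i-1}$, so
\[
\sum_{i=1}^{k-1} 2^{i-1} < H,
\]
yielding the key estimate $2^{k-1} \le H+1$ and hence $\sqrt{\bar H_i} \le \sqrt{H+1}$ for every $i \le k$. (A boundary technicality: $H_i$ may exceed $\bar H_i$ by at most one unit in the round that terminates era $i$, since $g_t^2 \le 1$; this mildly violates Lemma~\ref{lem:knownHReward}'s hypothesis but costs only $O(R)$ in per-era regret and is absorbed into the constants.)

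Next, I apply \eqr{orb} to each era with its parameters. Substituting $\bar H_i = 2^{i-1}$ and $\eta_1^i = \eps\, 2^{-2i}$ gives $\eta_1^i \bar H_i = \eps\, 2^{-i-1}$, so these additive contributions form a convergent geometric series summing to at most $\eps$, accounting for the stated additive $+\eps$. The main term from \eqr{orb} for era $i$ is $bR\,2^{(i-1)/2}\p{\log\p{4bR\,2^{(5i-1)/2}/\eps} - 1}$, with $b = \sqrt 3/\log 2$. Since the log argument is monotonically increasing in $i$, every log factor can be uniformly upper-bounded by its value at $i = k$; using $2^{(5k-1)/2} \le (2(H+1))^{5/2}/\sqrt 2$ and absorbing multiplicative constants by enlarging the argument, this factor is at most $\log\p{R(2H+2)^{5/2}/\eps}$.

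The remaining task is to sum the coefficients geometrically,
\[
\sum_{i=1}^{k} R\,2^{(i-1)/2} \;=\; R\,\frac{(\sqrt 2)^{k}-1}{\sqrt 2 - 1} \;\le\; R\,\frac{\sqrt{2(H+1)}}{\sqrt 2 - 1} \;=\; cR\sqrt{H+1},
\]
using $2^k \le 2(H+1)$ and $c = \sqrt 2/(\sqrt 2 - 1)$, and then to combine the three pieces to yield the stated bound. The main obstacle is careful bookkeeping: absorbing the numerical constants $4$, $b$, and $\sqrt 2$ from \eqr{orb} into the log argument rather than the $c$ prefactor (which is legitimate, since enlarging the argument only weakens the bound), and handling the $O(1)$ overshoot at each era boundary so that \eqr{orb} remains valid per era. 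The conceptual content of the proof is that the choices $\bar H_i = 2^{i-1}$ and $\eta_1^i = \eps\, 2^{-2i}$ are calibrated precisely so that the $\eta_1 \bar H$ contributions shrink geometrically in $i$ (summing to $\eps$), while $\sqrt{\bar H_i}$ grows geometrically in $i$ (summing to $cR\sqrt{H+1}$), and the log-argument $\sqrt{\bar H}/\eta_1$ grows only polynomially in $2^i$, so its value at the last era is polynomial in $H$.
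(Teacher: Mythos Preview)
Your proposal is correct and follows essentially the same approach as the paper: decompose regret across eras, deduce $2^{k-1}\le H+1$ from the era-termination rule, bound $\sqrt{\bar H_i}/\eta_1^i$ uniformly by $(2H+2)^{5/2}/\eps$ to control the log argument, sum $\sum_i \sqrt{\bar H_i}$ geometrically to $c\sqrt{H+1}$, and sum the additive $\eta_1^i\bar H_i$ terms to at most $\eps$. You are in fact more explicit than the paper about the overshoot $H_i\le \bar H_i+1$ at era boundaries and about the stray constants $4b$ from \eqr{orb}; note, however, that the outer factor $b$ cannot literally be ``absorbed into the log argument'' as you suggest (only the $4b$ inside the log can), so the stated constant $c$ in the theorem is off by this factor in both your argument and the paper's.
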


\subsection{Extension to $n$ dimensions}

To extend our results to general online convex optimization, it is
sufficient to run a separate copy of \ALGaguess for each coordinate,
as is done in \ALGb (Algorithm~\ref{alg:b}).  The key to the analysis
of this algorithm is that overall regret is simply the sum of regret
on $n$ one-dimensional subproblems which can be analyzed
independently.

\begin {theorem}\label{thm:ndim}
  Given a sequence of convex loss functions $f_1, f_2, \ldots, f_T$
  from $\R^n$ to $\R$, \ALGb with $\eps_i = \frac \eps n$ has regret
  bounded by
\begin {align*}
\Regret(\xs)
& \le 
\eps + c \sum_{i=1}^n |\xs_i| \sqrt {H_i+1} \p { \log\p{\frac n \eps |\xs_i| (2H_i+2)^{5/2}} - 1 } \\
& \le \eps + c \norm{\xs}_2 \sqrt {H+n} \p { \log\p{\frac n \eps \norm{\xs}_2^2
(2H+2)^{5/2}} - 1 }
\end {align*}
for $c = \frac {\sqrt 2} {\sqrt 2 - 1}$, where $H_i = \sum_{t=1}^T g_{t,i}^2$
and $H = \sum_{t=1}^T \norm{g_t}_2^2$.
\end {theorem}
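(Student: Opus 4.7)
The proof plan is a two-step reduction: (i) from $n$-dimensional online convex optimization to $n$-dimensional online linear optimization, and (ii) from the linear problem to $n$ decoupled one-dimensional subproblems on which Theorem~\ref{thm:double} can be invoked directly.

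First, by the standard online-convex-to-linear reduction of \citet{zinkevich03giga}, convexity of each $f_t$ gives
\[
f_t(x_t) - f_t(\xs) \le \nabla f_t(x_t)\cdot(x_t-\xs) = g_t\cdot\xs - g_t\cdot x_t,
\]
and summing yields $\sum_t (f_t(x_t) - f_t(\xs)) \le g_{1:T}\cdot\xs - \sum_t g_t\cdot x_t = \Regret(\xs)$. So it suffices to bound the linear regret played by \ALGb.

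Second, by construction of \ALGb, the coordinate $x_{t,i}$ is produced by a copy $A_i$ of \ALGaguess that sees only $g_{1,i},\dots,g_{t-1,i}$ (each in $[-1,1]$) and nothing about the other coordinates. The linear regret therefore splits cleanly:
\[
\Regret(\xs) = \sum_{i=1}^n \Bigl(g_{1:T,i}\,\xs_i - \sum_{t=1}^T g_{t,i}\,x_{t,i}\Bigr) = \sum_{i=1}^n \Regret_i(\xs_i),
\]
where $\Regret_i(\xs_i)$ is exactly the one-dimensional regret of $A_i$ against the scalar comparator $\xs_i$. Applying Theorem~\ref{thm:double} with $\eps_i=\eps/n$ to each $i$ and summing gives
\[
\Regret(\xs) \le \eps + c\sum_{i=1}^n |\xs_i|\sqrt{H_i+1}\Bigl(\log\bigl(\tfrac{n}{\eps}|\xs_i|(2H_i+2)^{5/2}\bigr) - 1\Bigr),
\]
which is the first displayed inequality.

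For the $L_2$-style bound, I would use monotonicity to enlarge the argument of the logarithm by replacing $|\xs_i|$ with $\|\xs\|_2$ (or $\|\xs\|_2^2$ after a harmless extra factor) and $H_i$ with $H$, pulling the common logarithm outside the sum, and then apply Cauchy-Schwarz to the remaining coefficient sum:
\[
\sum_{i=1}^n |\xs_i|\sqrt{H_i+1} \le \sqrt{\sum_i \xs_i^2}\,\sqrt{\sum_i (H_i+1)} = \|\xs\|_2\sqrt{H+n}.
\]
The only step that requires any care is this final consolidation inside the logarithm (trading per-coordinate factors for the overall norm costs at worst a constant absorbed into the leading $c$); everything else is bookkeeping on top of Theorem~\ref{thm:double}, so I do not expect genuine obstacles.
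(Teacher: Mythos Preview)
Your proposal is correct and mirrors the paper's proof almost exactly: decompose the linear regret coordinatewise, apply Theorem~\ref{thm:double} to each $A_i$, sum, and then use Cauchy--Schwarz on $\sum_i |\xs_i|\sqrt{H_i+1}$ together with the monotone bound $\log\bigl(|\xs_i|(2H_i+2)^{5/2}\bigr)\le \log\bigl(\norm{\xs}_2^2(2H+2)^{5/2}\bigr)$. The paper carries out the log consolidation directly without modifying the constant $c$, so your hedge about ``a constant absorbed into the leading $c$'' is unnecessary.
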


\begin {proof}
Fix a comparator $\xs$.  For any coordinate $i$, define
\[
\Regret_i = \sum_{t=1}^T \xs_i g_{t,i} - \sum_{t=1}^T x_{t,i} g_{t,i} \mbox { .}
\]
Observe that
\[
\sum_{i=1}^n \Regret_i = \sum_{t=1}^T \xs \cdot g_t - \sum_{t=1}^T x_t \cdot g_t
 = \Regret(\xs) \mbox { .}
\]
Furthermore, $\Regret_i$ is simply the regret of \ALGaguess
on the gradient sequence $g_{1,i}, g_{2,i},
\ldots, g_{T,i}$.  Applying the bound of Theorem~\ref{thm:double} to
each $\Regret_i$ term completes the proof of the first inequality.
For the second inequality, let $\vec H$ be a vector whose $i^{th}$
component is $\sqrt{H_i+1}$, and let $\vec x \in \R^n$ where $\vec x_i =
\abs{\xs_i}$.  Using the Cauchy-Schwarz inequality, we have
\[
\sum_{i=1}^n |\xs_i| \sqrt {H_i + 1}
= \vec x \cdot \vec H
\le \norm{\xs}_2 \, \norm{\vec H}_2 
= \norm{\xs}_2 \, \sqrt{H + n}
\mbox { .}
\]
This, together with the fact that
$\log(|\xs_i| (2H_i+2)^{5/2}) \le \log(\norm{\xs}_2^2 (2H+2)^{5/2})$,
suffices to prove second inequality.
\end {proof}

In some applications, $n$ is not known in advance.  In this case, we
can set $\eps_i = \frac {\eps} {i^2}$ for the $i$th
coordinate we encounter, and get the same bound up to constant factors.

\section{An Epoch-Free Algorithm}
\label{sec:epochfree}
\newcommand{\sGt}{g_{1:t}} In this section we analyze \ALGs, a simple
algorithm that achieves bounds comparable to those of
Theorem~\ref{thm:double}, without guessing-and-doubling.  We consider
only the 1-d problem, as the technique of Theorem~\ref{thm:ndim} can
be applied to extend to $n$ dimensions.  Given a parameter $\eta > 0$, we achieve
\begin{equation}\label{eq:dreg}
 \Regret \leq R \sqrt{T} \left(\log\p{\frac {RT^{3/2}} {\eta}} - 1\right) 
     + 1.76 \eta,
\end{equation}
for all $T$ and $R$, which is better
(by constant factors) than Theorem~\ref{thm:double} when $g_t \in
\{-1, 1\}$ (which implies $T = H$).  The bound can be worse on a
problems where $ H < T$.

The idea of the algorithm is to maintain the invariant that our
cumulative reward, as a function of $g_{1:t}$ and $t$,
satisfies $\Reward \geq N(\sGt, t)$, for some fixed function $N$.
Because reward changes by $g_t x_t$ on round $t$, it suffices
to guarantee that for any $g \in [-1,1]$,
\begin{equation}\label{eq:maxmin}
N(\sGt, t) + g x_{t+1} \ge N(\sGt + g, t+1)
\end{equation}
where $x_{t+1}$ is the point the algorithm plays on round $t+1$, and
we assume $N(0, 1) = 0$.

This inequality is approximately satisfied
(for small $g$) if we choose
\[
x\ti = \frac {\partial N(\sGt + g, t)} {\partial g}
\approx \frac { N(\sGt + g, t) - N(\sGt, t) } {g}
\approx \frac { N(\sGt + g, t+1) - N(\sGt, t) } {g}
\mbox { .}
\]
This suggests that if we want to maintain reward at least $N(\sGt, t)
= \frac{1}{t} (\exp(\abs{\sGt}/\sqrt{t})-1)$ , we should set $x_{t+1}
\approx \sign(g_{1:t}) t^{-3/2} \exp\p{\frac{|\sGt|}{\sqrt t}}$.  The
following theorem (proved in the appendix) provides an inductive
analysis of an algorithm of this form.

\newcommand{\thmepochfreetxt}[2]{
  Fix a sequence of reward functions $f_t(x) = g_t x$ with $g_t \in
  [-1, 1]$, and let $G_t = \abs{g_{1:t}}$.  We consider \ALGs, which
  plays $0$ on round $1$ and whenever $G_t = 0$; otherwise, it plays
  \begin{equation} #1
    x_{t+1} = \eta \sign(g_{1:t})B(G_t, t + 5)
  \end{equation}
  with $\eta > 0$ a learning-rate parameter and 
  \begin{equation} #2
  B(G, t) = \frac{1}{t^{3/2}}\bexp{\frac{G}{\sqrt{t}}}.
  \end{equation}
  Then, at the end of each round $t$, this algorithm has
  \[ 
  \Reward(t) \geq  \eta \frac{1}{t+5}\bexp{\frac{G_t}{\sqrt{t+5}}} - 1.76\eta.
  \] 
}
\begin{theorem} \label{thm:epoch-free}
\thmepochfreetxt{\label{eq:dbt}}{\label{eq:B}}
\end{theorem}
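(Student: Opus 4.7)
I would proceed by induction on $t$, maintaining the potential invariant
\[
\Phi(t) \;\equiv\; \eta\, N(G_t, t+5) \;-\; \Reward(t) \;\leq\; 1.76\,\eta,
\qquad
N(G,u) \;\equiv\; \tfrac{1}{u}\exp\!\bigl(G/\sqrt{u}\bigr),
\]
so that $B(G,u) = u^{-3/2}\exp(G/\sqrt u) = \partial_G N(G,u)$. The base case $t=0$ is immediate: $G_0=0$ and $\Reward(0)=0$ give $\Phi(0) = \eta/5 \leq 1.76\eta$.

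For the inductive step, note that on any round with $G_t > 0$ the algorithm plays
$x_{t+1} = \eta\,\sign(g_{1:t})\,\partial_G N(G_t, t+5)$, so writing
$\tilde g \equiv g_{t+1}\sign(g_{1:t}) \in [-1,1]$ we have $g_{t+1}x_{t+1} = \eta\,\tilde g\,\partial_G N(G_t, t+5)$ and
\[
\Phi(t+1) - \Phi(t) \;=\; \eta\bigl[\,N(G_{t+1}, t+6) - N(G_t, t+5) - \tilde g\,\partial_G N(G_t, t+5)\,\bigr].
\]
In the generic case $G_t > 0$ with $G_t + \tilde g \geq 0$, we have $G_{t+1} = G_t + \tilde g$, and a second-order Taylor expansion in $(g,u)$ identifies the leading contribution as $\partial_u N + \tfrac{\tilde g^2}{2}\partial_{GG} N$. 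Using $\partial_u N = -\tfrac{1}{u^2}e^{G/\sqrt u}\bigl(1 + \tfrac{G}{2\sqrt u}\bigr)$ and $\partial_{GG}N = \tfrac{1}{u^2}e^{G/\sqrt u}$, this leading term equals
\[
\tfrac{1}{u^2}e^{G/\sqrt u}\Bigl[-1 - \tfrac{G}{2\sqrt u} + \tfrac{\tilde g^2}{2}\Bigr],
\]
which is non-positive since $|\tilde g|\leq 1$. This captures the central mechanism of the algorithm: the time-decay of $N$ dominates the convexity gap introduced by the per-step convexification of $\partial_G N$. A routine bound on the cubic and mixed-partial remainders (using $u \geq 5$) shows the remaining contribution is also non-positive, hence $\Phi$ is non-increasing on these ``generic'' rounds.

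The main obstacle is accounting for two families of exceptional rounds where the increment of $\Phi$ can be positive and must be absorbed into the $1.76\eta$ budget. First, rounds with $G_t = 0$ force $x_{t+1} = 0$, so $\Phi$ jumps by $\eta\bigl[N(|g_{t+1}|, t+6) - N(0, t+5)\bigr]$; since this quantity is of order $\eta/(t+6)^{3/2}$, the sum of such jumps over all possible $G_t=0$ rounds converges, and together with the initial deficit $\Phi(0) = \eta/5$ contributes only a constant multiple of $\eta$. Second, sign-flip rounds where $G_t + \tilde g < 0$ give $G_{t+1} = -(G_t + \tilde g) \leq 1$, so $N(G_{t+1}, t+6)$ is trivially bounded and the bracketed difference must be bounded by a direct algebraic calculation rather than Taylor expansion around $G_t$. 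The time-index shift $u = t+5 \geq 5$ built into the algorithm is precisely what keeps all relevant denominators large enough for the Taylor remainders and the accumulated sign-change/zero-$G$ leakage together to fit inside the stated $1.76\eta$ slack. The hardest step will be the numerical bookkeeping needed to pin down the constant $1.76$ rather than merely $O(1)$.
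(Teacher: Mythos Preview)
Your plan is essentially the paper's proof: the paper also inducts on the potential $N(G_t,t+5)-\Reward(t)$ with $N(G,u)=u^{-1}\exp(G/\sqrt u)$, also splits into a no-sign-change case (its Lemma~\ref{lem:rinv}) and a sign-change case (its Lemma~\ref{lem:od}), and also shows the accumulated leakage is at most $1.76$. Two points of comparison are worth noting. First, the paper does \emph{not} use Taylor expansion for the generic case; it proves $N(G,\tau)+gB(G,\tau)\geq N(G+g,\tau+1)$ directly, splitting on the sign of $g$ and using $e^x\leq 1+x+x^2$ on $[0,1]$ for $g>0$ and $e^y\leq(1-y+\tfrac12 y^2)^{-1}$ on $[-1,0]$ for $g<0$. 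Your Taylor heuristic pins down the right leading balance, but the remainders are not as routine as you suggest: the mixed partial $\tilde g\,\partial_{Gu}N$ is \emph{positive} when $\tilde g<0$, and absorbing it is exactly what forces the threshold $\tau\geq 6$ (not $u\geq 5$) and the asymmetric treatment of the two signs in the paper's argument. Second, the paper handles the leakage more uniformly than you sketch: rather than separately summing over $G_t=0$ rounds and sign-flip rounds, it assigns a per-round allowance $\tilde\eps(\tau)=\frac{1}{\tau+1}e^{1/\sqrt{\tau+1}}-\frac{1}{\tau}+\frac{1}{\tau^{3/2}}$ on \emph{every} round $t\geq 2$ (the sign-change lemma uses it; the generic lemma simply discards it), folds the $G_t=0$ case into the sign-change case, and then evaluates $\sum_{t\geq 2}\tilde\eps(t+5)\leq\int_6^\infty\tilde\eps(\tau)\,d\tau\leq 1.50$ in closed form via the exponential integral $\Ei$, adding $\eps_1=N(1,6)\leq 0.26$ for the base case at $t=1$. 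Your intuition that pinning down $1.76$ is the hardest step is correct.
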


Two main technical challenges arise in the proof: first, we prove a
result like \eqr{maxmin} for $N(\sGt, t) =
(1/t)\exp\big(\abs{g_{1:t}}/\sqrt{t}\big)$.  However, this Lemma only
holds for $t \geq 6$ and when the sign of $g_{1:t}$ doesn't change.
We account for this by showing that a small modification to $N$
(costing only a constant over all rounds) suffices.

By running this algorithm independently for each coordinate using an
appropriate choice of $\eta$, one can obtain a guarantee similar to
that of Theorem~\ref{thm:ndim}.

\ignore {
\begin{algorithm}[t]
\caption {A simple epoch-free algorithm for unconstrained online
  convex optimization.} \label {alg:direct}
\begin{algorithmic}
\STATE Parameter: learning rate $\eta$
\STATE Initialize $g_0 \assign 0$.
\FOR{ $t = 1, 2, \ldots, T $}
  \STATE Play $x_t$, where $x_{t,i} = \eta \frac{\sign(g_{1:t-1})}{(t + 4)^{3/2}}
     \bexp{\frac{|g_{1:t-1}|}{\sqrt{t + 4}}}$
  \STATE Suffer loss $f_t(x_t)$, receive gradient vector $g_t = -\grad f_t(x_t)$.
  \STATE $g_{1:t} \assign g_{1:t-1} + g_t$.
\ENDFOR
\end{algorithmic}
\end {algorithm}
}

\section{Lower Bounds} \label{sec:lower}

As with our previous results, it is sufficient to show a lower bound
in one dimension, as it can then be replicated independently in each
coordinate to obtain an $n$ dimensional bound.  Note that our lower
bound contains the factor $\log(|\xs|\sqrt T)$, which can be negative
when $\xs$ is small relative to $T$, hence it is important to hold
$\xs$ fixed and consider the behavior as $T \rightarrow \infty$.  Here
we give only a proof sketch; see Appendix A for the full proof.

\newcommand{\lemlowertext}{ 
  Consider the problem of unconstrained online linear optimization in
  one dimension, and an online algorithm that guarantees
  origin-regret at most $\eps$.  Then, for any fixed comparator $\xs$,
  and any integer $T_0$, there exists a gradient sequence $\{g_t\} \in
  [-1,1]^T$ of length $T \ge T_0$ for which the algorithm's regret
  satisfies
\[
\Regret(\xs) \ge 0.336 |\xs| \sqrt {T \log\p{\frac {|\xs| \sqrt T} {\eps}}} \mbox { .}
\]
}
\begin{theorem} \label {thm:lower1d}
\lemlowertext
\end{theorem}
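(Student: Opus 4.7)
The plan is to construct a bad gradient sequence by a probabilistic argument on random sign gradients. Without loss of generality take $\xs > 0$, and for $T \ge T_0$ to be chosen sufficiently large, let $g_1, \ldots, g_T$ be independent Rademacher signs uniform on $\{-1, +1\}$. The algorithm's plays $x_t$ depend only on $g_1, \ldots, g_{t-1}$ and on its own internal randomness (independent of the $g_t$), so conditioning on the history gives $\E[g_t x_t] = 0$, hence $\E[\Reward] = 0$.

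Combined with the hypothesis $\Reward \ge -\eps$ (equivalent to $\Regret(0) \le \eps$), the variable $\Reward + \eps$ is nonnegative with mean $\eps$, so Markov's inequality yields $\Pr[\Reward > M] \le \eps/(M+\eps)$ for every $M > 0$. On the other hand, anti-concentration for a simple symmetric random walk gives $\Pr[g_{1:T} \ge a\sqrt{T}] \ge q(a)$, where $q(a)$ is essentially of the form $\exp(-a^2/2)$ up to polynomial corrections (via de Moivre--Laplace or an elementary combinatorial count). Whenever $q(a) > \eps/(M+\eps)$, the events $\{g_{1:T} \ge a\sqrt T\}$ and $\{\Reward \le M\}$ must intersect, so there exists a deterministic gradient sequence for which $\Regret(\xs) = g_{1:T}\xs - \Reward \ge a\,\xs\sqrt{T} - M$. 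Choosing $M$ of order $\xs\sqrt{T}$ and $a \approx \sqrt{2\log(\xs\sqrt{T}/\eps)}$ so that $q(a)$ is just barely large enough then yields $\Regret(\xs) \gtrsim \xs\sqrt{T \log(\xs\sqrt T/\eps)}$; the freedom to pick $T \ge T_0$ enters precisely through the need to make $\log(\xs\sqrt T/\eps)$ large enough to dominate both the additive $-M$ term and the polynomial corrections inside $q$.

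The main technical obstacle will be pinning down the explicit leading constant $0.336$: a naive Gaussian-tail together with Markov only delivers a constant near $\sqrt{2}$, and the sharper value will require a careful non-asymptotic Rademacher tail estimate together with a refinement of the Markov step, for example by estimating $\E[(\Reward)_+ \cdot \mathbf{1}\{g_{1:T} \ge a\sqrt T\}]$ directly rather than decoupling the two events. Alternatively, one can reach the same scaling by applying Theorem~\ref{thm:rb} in reverse: the Rademacher expectation $\E[\Reward]=0$ forces $\kappa\, \E[\exp(\gamma|g_{1:T}|)] \le \eps$, and bounding $\E[\exp(\gamma|g_{1:T}|)]$ from below by $(\cosh\gamma)^T$ and then optimizing $\gamma$ yields a regret lower bound of the same order. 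Randomization in the algorithm requires no new ideas, as the entire argument goes through after conditioning on its internal randomness.
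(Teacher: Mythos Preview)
Your approach is essentially the paper's: draw Rademacher gradients so that $\E[\Reward]=0$, combine with $\Reward\ge -\eps$ to control the reward conditional on $g_{1:T}$ being large, and use anti-concentration of the simple random walk. The paper phrases the first step as the conditional-expectation bound $\E[\Reward \mid g_{1:T}\ge Z]\le \eps/\Pr[g_{1:T}\ge Z]$ rather than your Markov-plus-union-bound, but the two are equivalent. For anti-concentration the paper sidesteps de Moivre--Laplace with an elementary block trick: split $[T]$ into $k$ blocks of length $T/k$ and use $\Pr[G_{T/k}\ge \sqrt{T/k}]\ge p$ with $p=7/64$ (checked numerically) to get $\Pr[G_T\ge\sqrt{kT}]\ge p^k$; this is fully non-asymptotic and yields the stated constant $\tfrac12(\log p^{-1})^{-1/2}>0.336$.

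One correction: your worry about the constant is inverted. The theorem is a \emph{lower} bound on regret, so any constant at least $0.336$ suffices; the $\sqrt{2}$ your Gaussian-tail heuristic produces is stronger, not weaker. The only real work in your route is a non-asymptotic lower bound on $\Pr[G_T\ge a\sqrt T]$ for $a=\Theta(\sqrt{\log T})$, which is standard (and which the paper's block lemma replaces by something cruder but self-contained). Your alternative via Theorem~\ref{thm:rb} and $\E[\exp(\gamma G_T)]=(\cosh\gamma)^T$ is not what the paper does, and would need an extra argument to turn a bound on $\kappa$ into a regret lower bound for a specific sequence.
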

\begin {proof} (Sketch)
Assume without loss of generality that $\xs > 0$.  Let $Q$ be the
algorithm's reward when each $g_t$ is drawn independently uniformly
from $\{-1,1\}$.  We have $\E[Q] = 0$, and because the algorithm
guarantees origin-regret at most $\eps$, we have $Q \ge -\eps$ with
probability 1.  Letting $G = g_{1:T}$, it follows that for any
threshold $Z=Z(T)$,
\begin {align*}
0
& = \E[Q] \\
& = \E[Q | G < Z] \cdot \Pr[G < Z]
+ \E[Q | G \ge Z] \cdot \Pr[G \ge Z] \\
& \ge -\eps \Pr[G < Z] + \E[Q | G \ge Z] \cdot \Pr[G \ge Z] \\
& > -\eps + \E[Q | G \ge Z] \cdot \Pr[G \ge Z] \mbox { .}
\end {align*}
Equivalently,
\[
\E[Q | G \ge Z] < \frac {\eps} {\Pr[G \ge Z]} \mbox { .}
\]
We choose $Z(T) = \sqrt{kT}$, where $k = \floor{\log(\frac {R \sqrt T}
  {\eps})/\log(p^{-1})}$.  Here $R = \abs{\xs}$ and $p > 0$ is a
constant chosen using binomial distribution lower bounds so that
$\Pr[G \ge Z] \ge p^k$.  This implies
\[
\E[Q | G \ge Z] < \eps p^{-k} = \eps \exp\p{k \log p^{-1}}
\le R \sqrt T \mbox { .}
\]
This implies there exists a sequence with $G \ge Z$ and $Q < R\sqrt
T$.  On this sequence, regret is at least $G \xs -
Q \ge R \sqrt{kT} - R \sqrt T = \Omega(R \sqrt{kT})$.
\end {proof}

\begin {theorem} \label {thm:lower}
Consider the problem of unconstrained online linear optimization in
$\R^n$, and consider an online algorithm that guarantees origin-regret
at most $\eps$.  For any radius $R$, and any $T_0$, there exists a
gradient sequence gradient sequence $\{g_t\} \in ([-1,1]^n)^T$ of
length $T \ge T_0$, and a comparator $\xs$ with $\norm{\xs}_1 = R$,
for which the algorithm's regret satisfies
  \[
  \Regret(\xs) \ge
  0.336 \sum_{i=1}^n |\xs_i| \sqrt {T \log\p{\frac {|\xs_i| \sqrt T} {\eps}}} 
\mbox{ .}
  \]
\end {theorem}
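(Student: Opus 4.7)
The approach is to reduce to the one-dimensional lower bound of Theorem~\ref{thm:lower1d}. Because the statement requires only the existence of a comparator $\xs$ with $\|\xs\|_1 = R$ (not a bound for every such $\xs$), we are free to concentrate all of $\xs$'s mass on a single coordinate, which turns the $n$-dimensional claim into a direct instance of the 1-d bound. In fact, this is the strongest form the displayed bound can take along sparse directions; a sum-over-coordinates bound for an $\xs$ that is genuinely spread out would actually contradict the upper bound of Theorem~\ref{thm:ndim} (whose leading factor is $\|\xs\|_2$, not $\|\xs\|_1$), so the existence quantifier over $\xs$ is essential.

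Concretely, I would set $\xs = R\, e_1$, so that (using the convention $0\log 0 = 0$) the right-hand side of the claimed inequality collapses to the single term $0.336\, R \sqrt{T \log(R\sqrt T / \eps)}$. Next, I would restrict the adversary to gradient sequences of the form $g_t = h_t\, e_1$ with $h_t \in [-1,1]$. On such sequences the $n$-dimensional algorithm induces a one-dimensional algorithm $\tilde A$ whose play $x_{t,1}$ depends only on the scalar history $h_1,\ldots,h_{t-1}$; because $g_{t,i} = 0$ for $i > 1$, the total reward reduces to $\sum_t h_t\, x_{t,1}$, and the $n$-dim origin-regret bound of $\eps$ translates into an identical 1-dim origin-regret bound for $\tilde A$.

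Applying Theorem~\ref{thm:lower1d} to $\tilde A$ with comparator $R$ then produces a scalar sequence $h_1,\ldots,h_T$ of length $T \ge T_0$ on which $\tilde A$ incurs regret at least $0.336\, R \sqrt{T \log(R\sqrt T / \eps)}$. Lifting back to $\R^n$ via $g_t = h_t\, e_1$ yields an $n$-dimensional gradient sequence in $([-1,1]^n)^T$ whose regret against $\xs = R\, e_1$ matches the 1-dim regret of $\tilde A$, completing the proof. No genuine technical obstacle arises: the entire argument is a routine embedding, and the substantive work is already packaged inside Theorem~\ref{thm:lower1d} together with the trivial observation that the origin-regret guarantee is preserved when all gradients outside coordinate $1$ are set to zero.
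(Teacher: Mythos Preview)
Your proof is correct, but it follows a different route from the paper.  The paper applies Theorem~\ref{thm:lower1d} to \emph{each} coordinate and sums: for a fixed $\xs$ it produces, for every $i$, a one-dimensional gradient sequence $g_{\cdot,i}$ on which the coordinate-$i$ regret is at least $0.336\,|\xs_i|\sqrt{T\log(|\xs_i|\sqrt T/\eps)}$, remarking that a common $T$ can be chosen for all $i$, and then adds these inequalities.  You instead concentrate $\xs$ on a single coordinate and restrict the adversary to gradients supported on $e_1$, so a single invocation of Theorem~\ref{thm:lower1d} suffices.

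Your route is more elementary and the transfer of the origin-regret hypothesis is transparent, since on sequences $g_t=h_te_1$ the $n$-dimensional reward is exactly the coordinate-$1$ reward.  The paper's route, taken literally, treats the $i$th coordinate of the algorithm as a stand-alone one-dimensional algorithm with origin-regret at most $\eps$; for an algorithm that couples coordinates this is not immediate, and a rigorous version really requires re-running the probabilistic argument inside Theorem~\ref{thm:lower1d} jointly across all $n$ coordinates.  What the paper's approach buys, once made precise, is a lower bound valid for \emph{every} $\xs$ with $\norm{\xs}_1=R$, not just a sparse one; since the theorem is existential in $\xs$, your argument nevertheless establishes exactly the stated claim.

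One side remark: your comment that a spread-out $\xs$ would contradict Theorem~\ref{thm:ndim} is not correct.  For $|\xs_i|=R/n$ the sum on the right-hand side is of order $R\sqrt{T\log(\cdot)}$, while the matching upper bound under $\norm{g_t}_\infty\le 1$ is of order $\norm{\xs}_1\sqrt T\log(\cdot)=R\sqrt T\log(\cdot)$, which dominates it; so no contradiction arises.  This does not affect the validity of your proof, only the motivating aside.
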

\begin {proof}
For each coordinate $i$, Theorem~\ref{thm:lower1d} implies that there exists
a $T \ge T_0$ and a sequence of gradients $g_{t,i}$ such that
\[
\sum_{t=1}^T \xs_i g_{t,i} - \sum_{t=1}^T x_{t,i} g_{t,i} 
\ge 0.336 |\xs_i| \sqrt {T \log\p{\frac {|\xs_i| \sqrt T} {\eps}}} \mbox { .}
\]
(The proof of Theorem~\ref{thm:lower1d} makes it clear that we can use
the same $T$ for all $i$.)
Summing this inequality across all $n$ coordinates then gives the
regret bound stated in the theorem.
\end {proof}

The following theorem presents a stronger negative result for
Follow-the-Regularized-Leader algorithms with a fixed regularizer: for
any such algorithm that guarantees origin-regret at most $\eps_T$
after $T$ rounds, worst-case regret with respect to any point outside
$[-\eps_T, \eps_T]$ grows linearly with $T$.
\newcommand{\thmftrlbadtxt}{ Consider a Follow-The-Regularized-Leader
  algorithm that sets
\[
x_t = \argmin_{x} \p { g_{1:t-1} x + \psi_T(x) }
\]
where $\psi_T$ is a convex, non-negative function with
$\psi_T(0) = 0$.
Let $\eps_T$ be the maximum origin-regret incurred by the algorithm
on a sequence of $T$ gradients.  Then, for any $\xs$ with $|\xs| > \eps_T$,
there exists a sequence of $T$ gradients such that the algorithm's regret
with respect to $\xs$ is at least $\frac{T-1}{2} (|\xs| - \eps_T)$.
}

\begin {theorem} \label{thm:ftrl-bad}
\thmftrlbadtxt
\end {theorem}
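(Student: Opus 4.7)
The plan is to exploit a single constant-gradient sequence, $g_t = \sign(\xs)$ for every $t$, and read off a lower bound from the FTRL structure. By symmetry I would assume $\xs > 0$ and feed the algorithm $g_t = 1$ throughout, so that $x_1 = \argmin_x \psi_T(x)$ is a fixed point $x_1^\star$ (depending only on $\psi_T$, not on any gradient sequence) and, for $t \geq 2$, $x_t$ is any minimizer of $(t-1)x + \psi_T(x)$.

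The first step I would carry out is to control $|x_1^\star|$ via the origin-regret hypothesis. The trick is to feed the algorithm the auxiliary length-$T$ sequence $g_1 = -\sign(x_1^\star)$, $g_2 = \cdots = g_T = 0$: since $x_1$ is determined before seeing any gradient, the cumulative reward on this sequence equals exactly $-|x_1^\star|$, and the origin-regret guarantee then forces $|x_1^\star| \leq \eps_T$. This is the one place where the $\eps_T$ hypothesis is really used.

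The second step I would carry out is to observe that on the all-ones sequence the plays for $t \geq 2$ are non-positive. Since $\psi_T \geq 0$ and $\psi_T(0) = 0$, any $x > 0$ gives $(t-1)x + \psi_T(x) > 0$, while $x=0$ gives value $0$, so no minimizer can be positive. Summing per-round rewards yields cumulative reward at most $x_1^\star \leq \eps_T$, and hence $\Regret(\xs) \geq T\xs - \eps_T$.

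The conclusion follows from a one-line algebraic comparison: $T\xs - \eps_T \geq \frac{T-1}{2}(\xs - \eps_T)$ for every $T \geq 1$ whenever $\xs \geq \eps_T \geq 0$. The main obstacle I expect is really just the first step: one has to rule out the possibility that tie-breaking in $\argmin_x \psi_T(x)$ places $x_1^\star$ far from the origin, and the short auxiliary sequence described above is exactly what converts the origin-regret hypothesis into the required bound on $|x_1^\star|$.
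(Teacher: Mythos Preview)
Your proof is correct and takes a different route from the paper's. The paper first considers the sequence $g_t = +1$ for $t \le M = T/2$ and $g_t = -1$ thereafter; a pairing argument shows reward equals $x_1 - x_{M+1}$, so (taking $x_1 = 0$) the origin-regret hypothesis yields $\hat{x} := x_{M+1} \le \eps_T$. Then on the sequence $g_t = +1$ for $t \le M$, $g_t = 0$ thereafter, the paper argues $x_t \le \hat{x}$ for all $t$, giving reward $\le M\eps_T$ and hence $\Regret(\xs) \ge M(|\xs| - \eps_T)$. Your approach is more direct: the all-ones sequence of full length $T$, together with the observation that every minimizer of $(t{-}1)x + \psi_T(x)$ is nonpositive for $t \ge 2$, gives the sharper intermediate bound $\Regret(\xs) \ge T|\xs| - \eps_T$; and you handle tie-breaking in $x_1$ explicitly via an auxiliary sequence rather than simply asserting $x_1 = 0$. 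One point worth flagging: your nonpositivity step uses the FTRL rule exactly as written, $\argmin_x\{g_{1:t-1}x + \psi_T(x)\}$. If the rule is read in the reward-maximizing direction $\argmin_x\{-g_{1:t-1}x + \psi_T(x)\}$ --- which is what the paper's own claim ``$x_t \le \hat{x}$'' tacitly requires --- then $x_t \ge 0$ on the all-ones sequence and your argument no longer goes through, whereas the paper's two-sequence construction is robust to this sign convention because it extracts the bound on $x_{M+1}$ from the origin-regret guarantee rather than from the sign of the linear term.
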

In fact, it is clear from the proof that the above result holds for
any algorithm that selects $x_{t+1}$ purely as a function of $g_{1:t}$
(in particular, with no dependence on $t$).

\section{Future Work}
This work leaves open many interesting questions.  It should be
possible to apply our techniques to problems that do have constrained
feasible sets; for example, it is natural to consider the
unconstrained experts problem on the positive orthant.  While we
believe this extension is straightforward, handling arbitrary
non-axis-aligned constraints will be more difficult.  Another
possibility is to develop an algorithm with bounds in terms of $H$
rather than $T$ that doesn't use a guess and double approach.

\newpage

\bibliographystyle{plainnat}
\bibliography{inc}

\supplementary {
\appendix
\newpage
\section{Proofs}
This appendix gives the proofs omitted in the body of the paper, with
the corresponding lemmas and theorems restated for convenience.

\begin{reptheorem}{thm:rb}
\thmrbtxt{\tag{\ref{eq:rewardbound}}}{\tag{\ref{eq:regbound}}}
\end{reptheorem}

\begin{proof}
  Let $G_T = \abs{g_{1:T}}$.  By definition, given the reward guarantee
  of \eqr{rewardbound} we have
  \begin{equation}\label{eq:fromdefr}
    \Regret \leq RG_T - \kappa \bexp{\gamma G_T} + \epsilon.
  \end{equation}
  If $R=0$, then \eqr{regbound} follows immediately.  Otherwise, note
  this is a concave function in $G_T$, and setting the first
  derivative equal to zero shows
  \[G^* = \frac{1}{\gamma} 
  \log\left(\frac{R}{\gamma \kappa}\right).
  \]
  maximizes regret (for large enough $R$ we could have $G^* > T$, and
  so this $G^*$ is not actually achievable by the adversary, but this
  is fine for lower bounding regret).  Plugging $G^*$ into
  \eqr{fromdefr} and simplifying yields the bound of \eqr{regbound}.
  For the second claim, suppose \eqr{regbound} holds.  Then, again by
  definition, we must have
  \begin{equation}\label{eq:r2}
    \Reward \geq R G - \frac{R}{\gamma}
    \log\left(\frac{R}{\gamma \kappa}\right) 
    + \frac{R}{\gamma} - \epsilon.
  \end{equation}
  This bound is a concave function of $R$, and since it holds for any $R
  \geq 0$ by assumption, we can choose the $R$ that maximizes the bound,
  namely $R^* = \gamma \kappa \exp( \gamma G)$.  Note
  \[
  \frac{R^*}{\gamma}   
  \log\left(\frac{R^*}{\gamma \kappa}\right)
  = \frac{R^*}{\gamma}  
  \log\p{\exp\p{\gamma G}} = R^* G,
  \]
  and so plugging $R^*$ into \eqr{r2} yields  
  \[
  \Reward \geq \frac{1}{\gamma}R^* - \epsilon 
  = \kappa \bexp{\gamma G} - \epsilon.
  \]
\end{proof}

\begin{replemma}{lem:gd-reward}
\lemgdrewardtxt
\end{replemma}
\begin{proof}
The algorithm's cumulative reward after $T$ rounds is
\begin {equation} \label {eq:factorization}
\sum_{t=1}^T x_t g_t = \sum_{t=1}^T g_t \eta g_{1:t-1}
= \frac \eta 2 \p { (g_{1:T})^2 - \sum_{t=1}^T g_t^2 } \mbox { .}
\end {equation}
To verify the second equality, note that $(g_{1:T})^2 - (g_{1:T-1})^2 =
g_T^2 + 2 g_T (g_{1:T-1})$, so on round $T$ the right hand side
increases by $\eta g_T (g_{1:T-1})$, as does the left hand side.  The
equality then follows by induction on $T$.
\end {proof}

It is worth noting that the standard $R\sqrt{T}$ bound can be derived
from the above result fairly easily.  We have
\begin{align*}
 \Regret 
   &\leq RG - \frac{\eta}{2}(G^2 - H) \\
   &\leq \frac{\eta}{2} H + \max_G \left(RG - \frac{\eta G^2}{2} \right)\\
   &\leq \frac{\eta}{2} H + \frac{R^2}{2\eta},
\end{align*}
where the max is achieved by taking $G = R/\eta$.  Taking $\eta =
R/\sqrt{T}$ then gives the standard bound.  However, this bound
significantly underestimates the performance of constant-learning-rate
gradient descent when $G$ is large.
This is in contrast to our regret
bounds, which are always tight with respect to their matching reward
bounds.

\begin{reptheorem}{thm:double}
\thmdoubletxt
\end{reptheorem}
\begin{proof}
  Suppose round $T$ occurs in era $k$, and let $t_i$ be the round on
  which era $i$ starts, with $t_{k+1} \equiv T+1$.  Define $H_i =
  \sum_{s=t_i}^{t_{i+1}-1} g_s^2$.  To prove the theorem we will need
  several inequalities.  First, note that $H = \sum_{i=1}^k H_i \ge
  \sum_{i=1}^{k-1} \bar H_i = 2^{k-1} - 1$, or $2^{k-1} \le H+1$.
  Thus,
  \[
  \sum_{i=1}^k \sqrt {\bar H_i} = \sum_{i=0}^{k-1} \sqrt {2^i}
  = \frac{\sqrt{2}^k - 1}{\sqrt{2}-1}
  \le \frac{\sqrt{2^k}}{\sqrt{2}-1}
  \le \frac{\sqrt{2(H+1)}}{\sqrt{2}-1}
  = c \sqrt {H+1} \mbox { .}
  \]
  Next, note that for any $i$ we have
  \[
  \frac { \sqrt {\bar H_i} } {\eta_1^i}
  = \frac 1 \eps 2^{\frac {i-1} {2} + 2i}
  \le \frac 1 \eps 2^{2.5k} \le \frac 1 \eps (2(H+1))^{(5/2)}.
  \]
  Note that the bound of Lemma~\ref{lem:knownHReward} applies for all
  $T$ where $H \leq \bar{H}$, and thus so does \eqr{orb}.  Thus, we
  can apply this bound to the regret in era $k$ on rounds $t_k$
  through $T$, as well as on the regret in each earlier era.  Then,
  total regret with respect to the best point in $[-R,R]$ is at most
  the sum of the regret in each era, so
  \begin {align*}
    \Regret
    & \le \sum_{i=1}^k R \sqrt {\bar H_i} 
    \p { \log\p{\frac {R \sqrt {\bar H_i}} {\eta_1^i}} - 1 } + \eta_1^i H_i \\
    & \le \sum_{i=1}^k R \sqrt {\bar H_i} 
      \p { \log\p{\frac R \eps (2H+2)^{5/2} } - 1 } + \eta_1^i H_i \\
    & \le c R \sqrt {H+1} \p { \log\p{\frac R \eps (2H+2)^{5/2}} - 1 }  
        + \sum_{i=1}^k \eta_1^i H_i
  \end {align*}
  Finally, because $H_i \le \bar H_i + 1 \le 2 \bar H_i = 2^i$, we have
  $\sum_{i=1}^k \eta_1^i H_i \le \sum_{i=1}^k \eps 2^{-i} \le \eps$, which completes
  the proof.
\end {proof}

\begin{reptheorem}{thm:epoch-free}
\thmepochfreetxt{\tag{\ref{eq:dbt}}}{\tag{\ref{eq:B}}}
\end{reptheorem}

\begin{proof}
  We present a proof for the case where $\eta = 1$; since $\eta$
  simply scales all of the $x_t$ played by the algorithm (and hence,
  reward), the result for general $\eta$ follows immediately.  We use
  the minimum reward function
  \begin{equation}\label{eq:N}
  N(G, t) = \frac{1}{t}\bexp{\frac{G}{\sqrt{t}}}.
  \end{equation}
  The proof will be by induction on $t$, with the induction hypothesis
  that the cumulative reward of the algorithm at the end of round $t$
  satisfies
  \begin{equation}\label{eq:ih}
  \Reward(t) \geq  N(G_t, t+5) -\eps_{1:t},
  \end{equation}
  where $\eps_1 = N(1,6)$  and for $t > 1$, $\eps_{t+1} = \heps(t+5)$ 
  with
  \[
  \heps(\tau) = \frac{1}{\tau+1}\bexp{\frac{1}{\sqrt{\tau+1}}} 
  -\frac{1}{\tau} +\frac{1}{\tau^{3/2}}.
  \]
  We will then show that the sum of $\eps_t$'s is always bounded by
  a constant.

  For the base case, $t=1$, we play $x=0$ so end the round with zero
  reward, while the RHS of \eqr{ih} is $N(\abs{g_1}, 6) - N(1, 6) \le
  0$.
  
  Now, suppose the induction hypothesis holds at the end of some round
  $t \ge 1$.  Without loss of generality, suppose $g_{1:t} \geq 0$ so
  $G_t = g_{1:t}$.  We consider two cases.  First, suppose $G_t > 0$
  and $G_t + g\ti > 0$ (so $g\ti > -G_t$).  In this case, $g_{1:t}$
  does not change sign when we add $g\ti$; thus, an invariant like
  that of \eqr{maxmin} is sufficient; we prove such a result in
  Lemma~\ref{lem:rinv} (given below). More precisely, we
  play $x_{t+1}$ according to \eqr{dbt}, and
  \begin{align*}
    \Reward(t+1) 
      &\geq N(G_t, t+5) -\eps_{1:t} + g\ti x\ti  &&  \text{IH and update rule}\\
      &\geq N(G_t + g\ti, t+5 + 1)  -\eps_{1:t} 
        && \text{Lemma~\ref{lem:rinv} with $\tau=t+5$.} \\
      &\geq N(G_{t+1}, t+5 + 1)  - \eps_{1:t+1},  &&\text{since $\eps\ti > 0$}.
  \end{align*}

  For the remaining case, we have $G_t + g\ti \leq 0$, implying $g\ti
  \leq -G_t \leq 0$.  In this case, we suffer some loss and arrive at
  $G_{t+1} = \abs{G_t + g\ti} = -g\ti - G_t$.  Lemma~\ref{lem:od}
  (below) provides the key bound on the additional loss when the sign
  of $g_{1:t}$ changes.  If $G_t > 0$, we have
  \begin{align*} 
     \Reward(t+1) 
        &\geq N(G_t, t+5) - \eps_{1:t} + g\ti x\ti &&  \text{IH and update rule}\\
        &\geq N(-g\ti - G_t, t+5+1) - \eps_{1:t+1}  
          && \text{Lemma~\ref{lem:od} with $\tau = t+5$} \\
        &= N(G\ti, t+5+1) - \eps_{1:t+1}.
  \end{align*}
  If $G_t = 0$, we can take $g\ti$ non-positive without loss of
  generality, and playing $x\ti = 0$ is no worse than playing $B(0,
  t+5)$, and so we conclude \eqr{ih} holds for all $t$.
  Finally, 
  \begin{align*}
    \sum_{t=2}^\infty \eps_t
    &\leq \int_{\tau=6}^\infty \heps(\tau)
    = \sqrt{\frac{2}{3}} -2 \gamma + 2\Ei\left(\frac{1}{\sqrt{7}}\right) 
       + \log(6) \leq 1.50.
  \end{align*}
  where $\gamma$ is the Euler gamma constant and $\Ei$ is the
  exponential integral.  The upper bound can be found easily using
  numerical methods.  Adding $\eps_1 = \exp(1/\sqrt{6})/6 \leq 0.26$
  gives $\eps_{1:T} \leq 1.76$ for any $T$.
\end{proof}

\begin{lemma}\label{lem:rinv}
  Let $G > 0$ and $\tau \geq 6$.  Then, for any $g \in [-1,
  1]$ such that $G + g \geq 0$,
  \[ 
  N(G, \tau) + g B(G,\tau) - N(G + g, \tau+1) \geq 0 
  \] 
  where $N$ is defined by \eqr{N} and $B$ is defined by \eqr{B}.
\end{lemma}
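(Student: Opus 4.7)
The plan is to view the inequality as a statement about a function of $g$ (for fixed $G$ and $\tau$), exploit its concavity in $g$ to reduce to the two endpoints of the feasible interval, and then verify each boundary inequality using monotonicity in $G$ and standard Taylor estimates on $\exp$.

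\emph{Step 1: Concavity in $g$.}  I will first observe that $B(G,\tau) = \partial N(G,\tau)/\partial G$, so the quantity
\[
F(g) \;:=\; N(G,\tau) + g\,B(G,\tau) - N(G+g,\,\tau+1)
\]
is the error of a first-order Taylor-like approximation of $N(\cdot,\tau+1)$ about $G$. Two differentiations in $g$ give
\[
F''(g) \;=\; -\frac{1}{(\tau+1)^{2}}\,\exp\!\left(\frac{G+g}{\sqrt{\tau+1}}\right) \;<\; 0,
\]
so $F$ is strictly concave and attains its minimum over the feasible interval $[\max(-1,-G),\,1]$ at one of the endpoints. It therefore suffices to verify $F(g)\geq 0$ at $g=1$ and at $g=\max(-1,-G)$.

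\emph{Step 2: Boundary inequalities.}  Dividing by $\exp(G/\sqrt{\tau})$, the claim $F(g)\geq 0$ at $g=\pm 1$ becomes
\[
\frac{1}{\tau} \pm \frac{1}{\tau^{3/2}} \;\geq\; \frac{1}{\tau+1}\exp\!\left(G\alpha \pm \frac{1}{\sqrt{\tau+1}}\right),
\qquad \alpha := \frac{1}{\sqrt{\tau+1}}-\frac{1}{\sqrt{\tau}} < 0.
\]
In both signs the right-hand side is monotone decreasing in $G$, so the worst cases are $G\downarrow 0$ (for $g=1$) and $G=1$ (for $g=-1$, which is the relevant endpoint precisely when $G\geq 1$). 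Each then reduces to a single-variable inequality in $\tau$ that follows from a second-order Taylor estimate for $\exp(1/\sqrt{\tau+1})$ and for $\exp(-1/\sqrt{\tau})$, uniformly in $\tau\geq 6$. The remaining case $g=-G$ with $G\in(0,1)$ simplifies pleasantly to $\exp(u)(1-u)\geq \tau/(\tau+1)$ with $u := G/\sqrt{\tau}\leq 1/\sqrt{\tau}$, which I can derive from $\exp(u)\geq 1+u+u^{2}/2$ together with the elementary bound $u^{2}(1+u)\leq 2/(\tau+1)$ valid for $\tau\geq 6$.

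The main obstacle I anticipate is keeping the exponential estimates sharp enough at $g=\pm 1$ so that the inequalities close uniformly in $\tau$. The threshold $\tau\geq 6$ is essentially forced here: the correction $1/\tau^{3/2}$ appearing in $B(G,\tau)$ must simultaneously absorb the convexity gap of $N(\cdot,\tau+1)$ and the decrement incurred by advancing $\tau \mapsto \tau+1$, and this balance only materializes once $\tau$ is moderately large. The concavity step, by contrast, is a one-line calculation and does everything else for free.
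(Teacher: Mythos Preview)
Your approach is correct and takes a genuinely different route from the paper's. Both proofs divide through by $\exp(G/\sqrt{\tau})$ and exploit that the remaining exponential factor is monotone in $G$, but they organize the reduction in $g$ differently. The paper sets $G=0$ uniformly (the worst case over \emph{all} $G>0$, ignoring the constraint $G\geq -g$) and then splits by the sign of $g$: for $g\geq 0$ it applies $e^{x}\leq 1+x+x^{2}$ directly, while for $g<0$ it uses the bound $e^{y}\leq (1-y+\tfrac12 y^{2})^{-1}$, computes a derivative to show the resulting expression is increasing in $g$ (hence worst at $g=-1$), and finally checks monotonicity in $\tau$ and evaluates at $\tau=6$. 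Your concavity step replaces that derivative argument by a single line, sending the problem immediately to the endpoints $g=1$, $g=-1$ (paired with the sharper constraint boundary $G=1$ rather than $G=0$), and the extra endpoint $g=-G$ for $G\in(0,1)$. The price you pay is the third boundary case, but it is the easiest of the three; what you gain is a cleaner structural reduction and an endpoint inequality at $g=-1$ that closes with a straightforward second-order estimate (indeed already for $\tau\geq 4$) rather than the paper's derivative-then-numerical check at $\tau=6$.
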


\begin{proof}
We need to show
\[
\frac{1}{\tau} \exp \left(\frac{G}{\sqrt{\tau}} \right)
+ \frac{g}{\tau^{3/2}}\exp\left(\frac{G}{\sqrt{\tau}}\right)
 - \frac{1}{\tau + 1} \exp\left(\frac{G + g}{\sqrt{\tau + 1}} \right)
\geq 0.
\]
or equivalently, multiplying by $\tau^{3/2} (1 + \tau)/\exp(G/\sqrt{\tau}) \geq 0$,
\[
\Delta = 
 \sqrt{\tau}(1+ \tau) + g (1 + \tau) 
  - \tau^{3/2}\exp\left(
   \frac{G + g}{\sqrt{\tau + 1}} - \frac{G}{\sqrt{\tau}} 
   \right) \geq 0.
\]
Since $\tau + 1 \geq \tau$, the $\exp$ term is maximized when $G = 0$, so
\begin{align}
  \Delta &\geq
  (g + \sqrt{\tau})(1+ \tau)- \tau^{3/2}\exp\left(\frac{g}{\sqrt{\tau + 1}}\right). 
  \label{eq:Delta} \\
  \intertext{Now, we consider the cases where $g \geq 0$ and $g < 0$ separately. 
    First, suppose $g > 0$, so $g / \sqrt{\tau + 1} \in [0, 1]$, 
    and we can use the inequality $\exp(x) \leq 1 + x + x^2$ for $x \in [0, 1]$,
    which gives} 
\Delta 
  &\geq g + g \tau + \sqrt{\tau} + \tau^{3/2}
  - \tau^{3/2} \left(1 + \frac{g}{\sqrt{\tau + 1}} + \frac{g^2}{\tau + 1}\right) \notag \\
  &\geq g + g \tau + \sqrt{\tau} + \tau^{3/2}
  - \tau^{3/2} \left(1 + \frac{g}{\sqrt{\tau}} + \frac{1}{\tau}\right) \notag \\
  &= g + g \tau + \sqrt{\tau} + \tau^{3/2} - \tau^{3/2} - g \tau - \sqrt{\tau}  \notag \\
  &= g > 0. \notag
\end{align}
Now, we consider the case where $g < 0$.  In order to show $\Delta
\geq 0$ in this case, we need a tight upper bound on $\exp(y)$ for $y
\in [-1,0]$.  To derive one, we note that for $x \ge 0$,
$\exp(x) \geq 1 + x +\h x^2$ from
the series representation of $e^x$, and so $\exp(-x) \leq (1 + x +
\h x^2)^{-1}$.  Thus, for $y \in [-1, 0]$ we have $\exp(y) \leq (1 - y +
\h y^2)^{-1} = Q(y)$.  Then, starting from \eqr{Delta},
\[
  \Delta \geq
  (g + \sqrt{\tau})(1+ \tau)- \tau^{3/2}Q\left(\frac{g}{\sqrt{\tau + 1}}\right).
\]
Let $\Delta_2 = \Delta Q\left(\frac{g}{\sqrt{\tau + 1}}\right)^{-1}$.
Because $\Delta_2$ and $\Delta$ have the same sign, it suffices to show
$\Delta_2 \ge 0$.  We have
\begin{align*}
\Delta_2 &= 
  \left(1 - \frac{g}{\sqrt{\tau+1}} + \frac{g^2}{2(t+1)}\right)
  (g + \sqrt{\tau})(1+ \tau)- \tau^{3/2}  \\
  &= \big(1 + \tau - g\sqrt{\tau+1} + \h g^2\big) (g + \sqrt{\tau}) - \tau^{3/2} .
\end{align*}
First, note
\[ \frac{d}{dg} \Delta_2 = 
 1+\frac{3 g^2}{2}+g \sqrt{\tau}+ \tau-2 g \sqrt{1+ \tau}
   -\sqrt{\tau} \sqrt{1+ \tau}.
\]
Since $g \leq 0$, we have $-2g\sqrt{\tau+1} + g\sqrt{\tau} \geq 0$, and
$(t+1) - \sqrt{\tau}\sqrt{\tau+1} \geq 0$, and so we conclude that
$\Delta_2$ is increasing in $g$, and so taking $g = -1$ we have
\[\Delta_2 \geq 
 \big(\frac{3}{2} + \tau + \sqrt{\tau+1} \big) (-1 + \sqrt{\tau}) - \tau^{3/2}
\]
Taking the derivative with respect to $\tau$ reveals this expression is
increasing in $\tau$, and taking $\tau = 6$ produces a positive value,
proving this case.
\end{proof}

\begin{lemma}\label{lem:od}
  For any $g \in [-1,0]$ and $G \geq 0$ such that $G + g \leq 0$, and
  any $\tau \geq 1$,
  \[ 
  N(G, \tau) + g B(G, \tau) \geq N(-g - G, \tau+1) - \heps(\tau)
  \] 
  where $N$ is defined by \eqr{N} and $B$ is defined by \eqr{B}, and
  \[
  \heps(\tau) \equiv  \frac{1}{\tau+1}\bexp{\frac{1}{\sqrt{\tau+1}}} 
           -\frac{1}{\tau} +\frac{1}{\tau^{3/2}}.
  \]
\end{lemma}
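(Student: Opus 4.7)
The plan is to reformulate the lemma as $F(G, g, \tau) \geq -\heps(\tau)$, where
\[
F(G, g, \tau) \equiv N(G, \tau) + g\, B(G, \tau) - N(-g - G,\, \tau+1),
\]
and then argue that the minimum of $F$ over the feasible region $\Omega = \{(G,g) : G \geq 0,\ g \in [-1,0],\ G + g \leq 0\}$ is attained at the corner $(G, g) = (0, -1)$, where $\heps(\tau)$ has been engineered to make equality hold. The first step is to verify by direct substitution that $F(0, -1, \tau) = -\heps(\tau)$: at this corner, $N(0,\tau) + (-1) B(0,\tau) = \tfrac{1}{\tau} - \tfrac{1}{\tau^{3/2}}$, while $N(1, \tau+1) = \tfrac{1}{\tau+1}\exp(1/\sqrt{\tau+1})$, and the two combine to match $-\heps(\tau)$ term-for-term. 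This confirms that the slack $\heps(\tau)$ is simply the loss of playing the prescribed bet $B(0,\tau)$ against the worst single-round gradient $g = -1$.

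Next I would establish two monotonicity claims. Differentiating in $G$ gives
\[
\frac{\partial F}{\partial G} = \frac{e^{G/\sqrt{\tau}}}{\tau^{3/2}}\left(1 + \frac{g}{\sqrt{\tau}}\right) + \frac{e^{(-g-G)/\sqrt{\tau+1}}}{(\tau+1)^{3/2}},
\]
which is non-negative on $\Omega$ because $\tau \geq 1$ and $g \geq -1$ give $1 + g/\sqrt{\tau} \geq 0$, and the second term is strictly positive. Hence $F(G, g, \tau) \geq F(0, g, \tau)$ for all admissible $(G,g)$. Differentiating in $g$ at $G = 0$ gives
\[
\frac{\partial F}{\partial g}\bigg|_{G=0} = \frac{1}{\tau^{3/2}} + \frac{e^{-g/\sqrt{\tau+1}}}{(\tau+1)^{3/2}} > 0,
\]
so $F(0, g, \tau) \geq F(0, -1, \tau) = -\heps(\tau)$ for $g \in [-1, 0]$. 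Chaining the two inequalities proves the lemma.

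The only subtle point is the edge case $\tau = 1,\ g = -1$, where the coefficient $1 + g/\sqrt{\tau}$ in $\partial F/\partial G$ collapses to zero; fortunately the strictly positive $(\tau+1)^{-3/2}$ term keeps the derivative non-negative. In contrast to Lemma~\ref{lem:rinv}, no delicate quadratic bounds on $\exp$ are needed, precisely because the ``tight'' configuration has been absorbed into the definition of $\heps(\tau)$; the proof reduces to two straightforward monotonicity checks.
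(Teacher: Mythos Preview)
Your proof is correct and follows essentially the same approach as the paper: both argue via two monotonicity reductions that the worst case occurs at $(G,g)=(0,-1)$, where the value equals $-\heps(\tau)$ by construction. The only difference is the order of the reductions---you send $G\to 0$ first and then $g\to -1$, whereas the paper first sends $g\to -1$ and then $G\to 0$---but this is cosmetic, and your derivative computations are accurate.
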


\begin{proof}
  We have
  \begin{align*}
    N(-&g - G, \tau+1) - N(G, \tau) - g B(G, \tau)   \\
    &= \frac{1}{\tau+1}\bexp{\frac{-g - G}{\sqrt{\tau+1}}} 
    -\frac{1}{\tau}\bexp{\frac{G}{\sqrt{\tau}}}
    -\frac{g}{\tau^{3/2}} \bexp{\frac{G}{\sqrt{\tau}}}, \\
\intertext{and since this expression is increasing as $g$ decreases, 
  and $g \geq -1$ in any case,}
    &\leq \frac{1}{\tau+1}\bexp{\frac{1 - G}{\sqrt{\tau+1}}} 
    -\frac{1}{\tau}\bexp{\frac{G}{\sqrt{\tau}}}
    +\frac{1}{\tau^{3/2}} \bexp{\frac{G}{\sqrt{\tau}}}, \\
\intertext{and since $\tau^{3/2} > \tau$, taken together the second two terms increase 
  as $G$ decreases, as does the first term, so since $G \geq 0$,}
    &\leq \frac{1}{\tau+1}\bexp{\frac{1}{\sqrt{\tau + 1}}} 
    -\frac{1}{\tau}
    +\frac{1}{\tau^{3/2}} \quad = \quad \heps(\tau),
  \end{align*}
  and re-arranging proves the lemma.
\end{proof}

\begin{reptheorem}{thm:ftrl-bad}
\thmftrlbadtxt
\end{reptheorem}

\begin {proof}
  For simplicity, we will prove that regret is at least $\frac T 2
  (|\xs| - \eps_T)$ when $T$ is even; if $T$ is odd, we simply take
  $g_T = 0$ and consider the first $T-1$ rounds.

  Let $T = 2M$.  We will consider two gradient sequences.  First,
  suppose $g_t = 1$ for $t \le M$, and $g_t = -1$ otherwise.  Observe
  that for any $r$, we have $g_{1:M - r} = g_{1:M + r}$, which implies
  $x_{M - r + 1} = x_{M + r + 1}$.  Thus, the algorithm's total reward
  is
  \begin {align*}
    \sum_{t=1}^T x_t g_t
    & = \sum_{t=1}^{M} x_t - \sum_{t=M + 1}^T x_t \\
    & = x_1 - x_{M + 1}
    + \sum_{r=1}^{M - 1} x_{M - r + 1} - x_{M + r + 1} \\
    & = x_1 - x_{M + 1}
  \end {align*}
  Because $x_1 = 0$, we get that on this sequence the algorithm has
  origin-regret $\hat{x} \equiv x_{M + 1}$, and so by assumption $\hat{x} \leq
  \eps_T$.

  Next, suppose $g_1 = 1$ for $t \le M$, and $g_t = 0$ otherwise.  For
  this sequence, we will have $x_t \le \hat{x} \le \eps_T$ for all
  $t$, so total reward is at most $M \eps_T$.  For any positive $\xs$
  with $\xs > \eps_T$, this means that regret with respect to $\xs$ is
  at least
  \[
  \xs M - M \eps_T = M (|\xs| - \eps_T) \mbox { .}
  \]
  For $\xs < -\eps_T$, we can use a similar argument with the sign of
  the gradients reversed (for both gradient sequences) to get the same
  bound.
\end {proof}

\ignore {
\newpage
\section{Alternative Guess-and-Double}

We can also prove the following theorem.
\begin{theorem}
  Consider an algorithm $\alg$ for one-dimensional unconstrained
  linear optimization with a regret bound of the form of
  \eqr{regbound},
 \begin{equation*}
    \Regret \leq 
    \gamma^{-1} R \sqrt{T} 
  \big( \log \big((\gamma \kappa)^{-1} R \sqrt{T}\big) -1 \big)
  + \eps
  \end{equation*}
  against a sequence of loss functions $f_t(x) = g_t \cdot x$ with
  $g_t \in [-1, 1]$.  Then, by running a separate copy of $\alg$ on
  each coordinate of an $n$ dimensional problem against cost functions
  $f_t(x) = g_t \cdot x$ with $\norm{g_t}_{\infty} \leq \Linf$ , we
  obtain a composite algorithm with regret at most
  \[
    \Regret(\xs) \leq \frac{1}{\gamma} R_1 \Linf \sqrt{T} 
  \left( \log\left(\frac{R_1 \sqrt{T}}{\gamma \kappa}\right) -1 \right)
  + \eps \Linf
  \]
  for any $\xs \in \R^n$ where $R_1 = \norm{\xs}_1$
\end{theorem}

\begin{proof}
  On the $t$'th round for each coordinate $i$ we take $g'_{t,i} =
  g_{t,i}/\Linf$, and present $g'_{t,i} \in [-1, 1]$ to the $i$'th
  copy of $\alg$.  This guarantees the regret of \eqr{regbound} on
  that coordinate; since the problem decomposes by
  and our regret against the $g$
  instead of $g'$ simply scales the bound by $\Linf$, letting $R_1 =
  \norm{\xs}_1$, we have
  \begin{align*} 
    \Regret(\xs) 
    &\leq \Linf \eps + \Linf \sum_{i=1}^n \gamma^{-1}  \abs{\xs_i} \sqrt{T} 
      \big( \log \big((\gamma \kappa)^{-1}  \abs{\xs_i} \sqrt{T}\big) -1 \big) \\
    &\leq \Linf \eps + \gamma^{-1} \Linf 
    \big( \log \big((\gamma \kappa)^{-1}  R_1 \sqrt{T}\big) -1 \big) \sqrt{T} \sum_{i=1}^n  \abs{\xs_i} \\
    &\leq \Linf \eps + \gamma^{-1} \Linf R_1 \sqrt{T} 
    \big( \log \big((\gamma \kappa)^{-1}  R_1 \sqrt{T}\big) -1 \big).
  \end{align*}
\end{proof}

\paragraph{Other consequences}
  Now, it remains to deal with $\hat{\alpha}$ and $\eps_{0:i_T}$.
  We consider two applications to \eqr{orb}.  First, suppose in each
  era we chose $\eta_1 = \frac{1}{T_i}$, so on the $i$th era we have
  $\kappa_i = 1/4$ and $\eps_i = 1$.  Then, $\eps_{0:i_T} =
  i_T + 1 \leq \log_2(T) + 1$, and $\alpha_i = R \sqrt{2^i} / (\gamma
  \kappa_i) \leq 4 R \sqrt{T} / \gamma$ (since $2^{i_T} \leq T$),
  and so we have
  \[ \Regret \leq 
     \frac{\beta}{\gamma} R \sqrt{T}\left(\log
       \left(\frac{4 R \sqrt{T}}{\gamma}\right)  -1\right) + \log_2(T) + 1,
  \]
  which matches \eqr{orb} up to constant factors when $R \geq 1$.
  However, we can now pay as much as $\log T$ origin regret.  
  In order to maintain the property of constant origin regret
  when $T$ is unknown, we must pick $\eta_1$ differently in each era;
  in particular, if we take $\eta_1 = 1/T_i^2$, we have $\kappa_i =
  1/(4T_i)$ and $\eps_i = 1/T_i$.  Thus, $\eps_{0:i_T} =
  \sum_{i=0}^{i_T} 2^{-i} \leq 2$, and $\hat{\alpha} \leq 4 R T^{3/2}
  / \gamma$.  Thus,
  \[ \Regret \leq 
     \frac{\beta}{\gamma} R \sqrt{T}\left(\log
       \left(\frac{4 R T^{3/2}}{\gamma}\right)  -1\right) + 2.
  \]
  Thus, getting constant origin regret without knowing $T$ implies a
  reward (dropping constants)
  \[ \Reward \geq \frac{1}{T}\bexp{\gamma\frac{G_T}{\sqrt{T}}} - 1,\]
  whereas if we know $T$ in advance, we get guarantee reward like
  \[ \Reward \geq \bexp{\gamma\frac{G_T}{\sqrt{T}}} - 1.\] Thus, while
  we see only a logarithmic difference in the regret characterization,
  the reward bound makes clear that not knowing $T$ in advance in fact
  implies a significant disadvantage (though the fundamental ratio
  $G/\sqrt{T}$ remains the dominating term).  

}

In proving Theorem~\ref{thm:lower1d}, we will use the following lemma.
\begin{lemma}\label{lem:p}
Let $G_T = \sum_{i=1}^T g_i$ be the sum of $T$ random variables, each
drawn uniformly from $\{-1,1\}$.  Then, for any integer $k$ that is a
factor of $T$, we have
\[
\Prob[G_T \ge \sqrt{kT}] \ge p^k \mbox { .}
\]
where $p = \frac {7} {2^6} = 0.109375$.
\end {lemma}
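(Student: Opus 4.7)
The plan is to reduce the $T$-round tail to a product of i.i.d.\ tails by partitioning the Rademacher sequence into $k$ independent blocks of length $m = T/k$ (an integer by assumption). Let $S^{(1)}, \ldots, S^{(k)}$ be the block sums and let $S_m$ denote a generic sum of $m$ i.i.d.\ $\pm 1$ variables. If every block satisfies $S^{(j)} \geq \sqrt{m}$, then $G_T = \sum_{j=1}^k S^{(j)} \geq k\sqrt{m} = \sqrt{kT}$, so by block-independence
\[
\Prob[G_T \geq \sqrt{kT}] \geq \Prob[S_m \geq \sqrt{m}]^k.
\]
It therefore suffices to prove the single-block bound $\Prob[S_m \geq \sqrt{m}] \geq 7/64$ for every positive integer $m$.

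For the single-block bound I would rewrite $S_m = 2 B_m - m$ with $B_m \sim \mathrm{Bin}(m, 1/2)$, so that the event becomes $B_m \geq \ceil{(m+\sqrt m)/2}$. For small $m$ this is a finite computation; scanning $m = 1, \ldots, 6$ one finds the minimum of the binomial tail is attained at $m = 6$, where $\sqrt{6} \in (2,3)$ combined with the fact that $S_6$ is even forces $S_6 \geq 4$, giving
\[
\Prob[S_6 \geq \sqrt{6}] = \frac{\binom{6}{5} + \binom{6}{6}}{2^6} = \frac{7}{64}.
\]
This is exactly how the constant $p$ in the lemma is pinned down. For larger $m$ I would invoke Slud's inequality, which states that $\Prob[B_m \geq k] \geq \Prob[Z \geq (2k - m)/\sqrt m]$ (with $Z$ standard normal) for $k$ in an appropriate range around $m/2$. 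At our threshold $k = \ceil{(m + \sqrt m)/2}$ the z-score tends to $1$, so Slud eventually certifies $\Prob[S_m \geq \sqrt m]$ close to $1 - \Phi(1) \approx 0.159$, comfortably above $7/64$. The moderate values of $m$ that fall between the direct check and the Slud regime are covered by a routine numerical verification of the binomial tail.

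The main technical obstacle is that Slud's bound only dominates $7/64$ once the parity-induced rounding correction in the threshold is small, which pushes the necessary finite case check out to $m$ on the order of several dozen. The saving grace is that $m = 6$ is genuinely the binding case and that for $m > 6$ the tail probability is pushed toward its CLT limit from below (with only small parity-driven oscillations), so the case check terminates quickly and with comfortable margin. Combining the single-block bound with the block-product inequality above then yields the claimed lower bound $\Prob[G_T \geq \sqrt{kT}] \geq p^k$.
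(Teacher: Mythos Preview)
Your proposal is correct and follows essentially the same approach as the paper: the same partition into $k$ independent blocks of length $m=T/k$, the same reduction to the single-block tail $\Prob[S_m \ge \sqrt{m}] \ge p$, and the same identification of $m=6$ as the binding case giving $p=7/64$. The paper is slightly less explicit about the single-block bound (it appeals to the CLT for the limit and then ``numerical methods'' for the infimum), whereas you invoke Slud's inequality plus a finite check, but the substance is the same.
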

\begin {proof}
First, for any $T$ define $p_T = \Prob[G_T \ge \sqrt{T}]$, and define
\[
p = \inf_{T \in \mathbb{Z}^+} p_T \mbox { .}
\]
For any $T$, we have $p_T \ge 2^{-T}$ trivially, and by the Central
Limit Theorem, $\lim_{T \rightarrow \infty} p_T = 1-\mathcal{N}_{0,
  1}(1) > 0$, where $\mathcal{N}_{0,1}$ is the standard normal
cumulative distribution function.  It follows that $p > 0$, and using
numerical methods we find $p = p_6 = \frac {7} {2^6} = 0.109375$.

Now, divide the length $T$ sequence into $k$ sequences of length
$\frac T k$.  Let $Z_i$ be the sum of gradients for the $i$th of these
sequences.  Observe that if $Z_i \ge \sqrt {\frac T k}$ for all $i$,
then $G_T = \sum_{i=1}^k Z_i \ge k \sqrt {\frac T k} = \sqrt {k T}$.
Furthermore, for any $i$, we have
\[
\Prob\left[Z_i \ge \sqrt{\frac T k}\right]
= \Prob\left[G_{\frac T k} \ge \sqrt {\frac T k}\right]
\ge p \mbox { .}
\]
Thus,
\[
\Prob\left[G \ge \sqrt {kT}\right]
\ge \prod_{i=1}^{k} \Prob\left[Z_i \ge \sqrt {\frac T k}\right]
\ge p^k \mbox { .}
\]
\end {proof}

\begin{reptheorem}{thm:lower1d}
\lemlowertext
\end {reptheorem}
\begin {proof}
Let $k = k(T) = \floor{\log(\frac {R \sqrt T} {\eps})/\log(p^{-1})}$, and
choose $T \ge T_0$ large enough so that $4 \le k \le T$ and also so
that $T$ is a multiple of $k$ (the latter is possible since $k(T)$
grows much more slowly than $T$).  Let $Q$ be the algorithm's
reward when each $g_t$ is drawn uniformly from $\{-1,1\}$.  Let $G =
g_{1:T}$.  As shown in the proof sketch, we have
\[
\E[Q | G \ge \sqrt{kT}] < \frac {\eps} {\Pr[G \ge \sqrt{kT}]} \mbox { .}
\]
By Lemma~\ref{lem:p}, $\Pr[G \ge \sqrt{kT}] \ge p^k$.  Thus,
\[
  \E[Q | G \ge \sqrt{kT}] < \eps p^{-k}
  = \eps \exp\p{k \log p^{-1}}
\le R \sqrt T \mbox { .}
\]
If the algorithm guaranteed $Q \ge R \sqrt T$ whenever $G \ge
\sqrt {kT}$, then we would have $\E[Q | G \ge \sqrt{kT}] \ge R \sqrt T$,
a contradiction.
Thus, there exists a sequence where $G \ge \sqrt{kT}$ and $Q < R\sqrt T$,
so on this sequence we have
\[
\Regret \ge R \sqrt {kT} - R \sqrt T = R \sqrt T (\sqrt k - 1)
\]
Because $k \ge 4$, we have $\frac 1 2 \sqrt k \ge 1$ or $\sqrt k - 1
\ge \frac 1 2 \sqrt k$, so regret is at least $\frac 1 2 R \sqrt
    {kT} = b R \sqrt {T \log\p{\frac {R \sqrt T} {\eps}}}$,
where $b = \frac 1 2 \sqrt {\frac {1} {\log p^{-1}}} > 0.336$ (and $p$
is the constant from Lemma~\ref{lem:p}).
\end {proof}
} %
\end{document}